\documentclass[sigconf, nonacm]{acmart}

\pdfpagewidth=8.5in
\pdfpageheight=11in

\usepackage{hyperref} 
\usepackage{graphicx}
\usepackage{booktabs}
\usepackage{amsmath}
\usepackage{amsthm}
\usepackage{bbm}
\usepackage{amsfonts}
\usepackage{graphicx}
\usepackage{subfigure}
\usepackage{url}
\usepackage{diagbox}
\usepackage{balance}
\usepackage[noend]{algorithmic}
\usepackage{algorithm}
\usepackage{multirow}
\usepackage{xspace}

\usepackage{empheq}


\usepackage{pifont}
%
%


\newtheorem{theor}{Theorem}

\newtheorem{lem}{Lemma}

\newtheorem{example}{Example}



\newcommand{\squishlist}{
 \begin{list}{$\bullet$}
  {  \setlength{\itemsep}{0pt}
     \setlength{\parsep}{3pt}
     \setlength{\topsep}{3pt}
     \setlength{\partopsep}{0pt}
     \setlength{\leftmargin}{2em}
     \setlength{\labelwidth}{1.5em}
     \setlength{\labelsep}{0.5em}
} }
\newcommand{\squishlisttight}{
 \begin{list}{$\bullet$}
  { \setlength{\itemsep}{0pt}
    \setlength{\parsep}{0pt}
    \setlength{\topsep}{0pt}
    \setlength{\partopsep}{0pt}
    \setlength{\leftmargin}{2em}
    \setlength{\labelwidth}{1.5em}
    \setlength{\labelsep}{0.5em}
} }

\newcommand{\squishdesc}{
 \begin{list}{}
  {  \setlength{\itemsep}{0pt}
     \setlength{\parsep}{3pt}
     \setlength{\topsep}{3pt}
     \setlength{\partopsep}{0pt}
     \setlength{\leftmargin}{1em}
     \setlength{\labelwidth}{1.5em}
     \setlength{\labelsep}{0.5em}
} }

\newcommand{\squishend}{
  \end{list}
}



\newcommand{\eat}[1]{}

\newcommand{\ie}{i.e.,\xspace}
\newcommand{\eg}{e.g.,\xspace}
\newcommand{\wrt}{w.r.t.\xspace}

\newcommand{\NP}{\ensuremath{\mathbf{NP}}\xspace}

\newcommand{\kw}[1]{{\ensuremath {\mathsf{#1}}}\xspace}

\newcommand{\stitle}[1]{\vspace{1.5ex}\noindent{\bf #1}}

\newcommand{\eetitle}[1]{\vspace{0.8ex}\noindent{\em\underline{#1}}}

\newcommand{\stab}{\rule{0pt}{8pt}\\[-1.6ex]}

\newcommand{\sstab}{\rule{0pt}{8pt}\\[-2.4ex]}
\newcounter{ccc}


\DeclareMathOperator*{\argmax}{arg\,max}

\newcommand\redout{\bgroup\markoverwith
{\textcolor{red}{\rule[.5ex]{2pt}{2pt}}}\ULon}


 




\newcommand{\G}{{\mathcal G}}
\newcommand{\V}{{\mathcal V}}

\newcommand{\M}{{\mathcal M}}
\newcommand{\C}{{\mathcal C}}
\newcommand{\gnn}{\kw{GNN}}
\newcommand{\gnns}{\kw{GNNs}}

\newcommand{\gcn}{\kw{GCN}}
\newcommand{\gcns}{\kw{GCNs}}


\newcommand{\PTIME}{\kw{PTIME}}

\renewenvironment{proof}{
        \vspace{1ex}
        {\noindent\bf Proof:}}{\vspace{1ex}}
\newenvironment{proofS}{
        \vspace{1ex}
        {\noindent\bf Proof sketch:\ }}{\vspace{1ex}}

\newcommand{\slice}{\kw{SliceGX}}

\renewcommand{\S}{\mathcal{S}}
\newcommand{\verify}{\kw{Verify}}

\newcommand{\slicess}{\kw{Slice(SS)}}
\newcommand{\slicems}{\kw{Slice(MS)}}
\newcommand{\slicemm}{\kw{Slice(MM)}}

\newcommand{\verifyss}{\kw{Verify(SS)}}
\newcommand{\verifyms}{\kw{IncVerify(MS)}}

\newcommand{\gnnexp}{\kw{GNNExplainer}}
\newcommand{\subx}{\kw{SubgraphX}}
\newcommand{\pgexp}{\kw{PGExplainer}}
\newcommand{\gmask}{\kw{GraphMask}}
\newcommand{\same}{\kw{SAME}}
\newcommand{\flowx}{\kw{FlowX}}


\newcommand{\ba}{\kw{BA}}

\newcommand{\am}{\kw{AM}}

\newcommand{\ttrevise}[1]{{\color{black}{#1}}}


 \newcommand{\bi}{\begin{itemize}}
\newcommand{\ei}{\end{itemize}}
        {\end{itemize}\vspace{-0.5ex}}
\usepackage[utf8]{inputenc}
\usepackage{soul}
\AtBeginDocument{%
  }

\usepackage{amsmath,amsfonts} 
\usepackage{graphicx}
\usepackage{textcomp}
\usepackage{xcolor}
\usepackage{tabularx}
\usepackage{booktabs}
\usepackage{array}
\newcolumntype{C}{>{\centering\arraybackslash}X} 
\usepackage{graphicx}
\usepackage{subcaption}

\copyrightyear{2026}
\acmYear{2026}
\setcopyright{cc}
\setcctype{by}
\acmConference[WWW '26] {Proceedings of the ACM Web Conference 2026}{April 13--17, 2026}{Dubai, United Arab Emirates.}
\acmBooktitle{Proceedings of the ACM Web Conference 2026 (WWW '26), April 13--17, 2026, Dubai, United Arab Emirates}
\acmISBN{979-8-4007-2307-0/2026/04}
\acmDOI{10.1145/3774904.3792689} 
\begin{document}

\title{SliceGX: Layer-wise GNN Explanation with Model-slicing}

\author{Cibo Yu}
\affiliation{%
  \institution{Zhejiang University}
  \city{Hangzhou}
  \country{China}
}
\email{cbyu@zju.edu.cn}

\author{Tingting Zhu}
\affiliation{%
  \institution{Zhejiang University}
  \city{Hangzhou}
  \country{China}
}
\email{tingtingzhu@zju.edu.cn}

\author{Tingyang Chen}
\affiliation{%
  \institution{Zhejiang University}
  \city{Hangzhou}
  \country{China}
}
\email{chenty@zju.edu.cn}

\author{Yinghui Wu}
\affiliation{%
  \institution{Case Western Reserve University}
  \city{Cleveland}
  \country{USA}
}
\email{yxw1650@case.edu}

\author{Arijit Khan}
\affiliation{%
  \institution{Bowling Green State University}
  \city{Bowling Green}
  \country{USA}
}
\email{arijitk@bgsu.edu}

\author{Xiangyu Ke}
\authornote{Corresponding author.}
\affiliation{%
  \institution{Zhejiang University}
  \city{Hangzhou}
  \country{China}
}
\email{xiangyu.ke@zju.edu.cn}
\renewcommand{\shortauthors}{Cibo Yu et al.}

\begin{abstract}
Ensuring the trustworthiness of graph neural networks (\gnns), which are often treated as black-box models, requires effective explanation techniques. 
Existing \gnn explanations typically apply input perturbations to identify subgraphs that are responsible for the occurrence of the final output of \gnns. 
However, such approaches lack finer-grained, layer-wise analysis of how intermediate representations contribute to the final result, capabilities that are crucial for model diagnosis and architecture optimization. 
This paper introduces \slice, a novel \gnn explanation approach that generates explanations at specific \gnn layers in a progressive manner. 
Given a \gnn model $\M$, a set of selected intermediate layers, and a target layer, \slice  slices $\M$ into layer blocks (``model slice'') and discovers high-quality explanatory subgraphs within each block that elucidate how the model output arises at the target layer.
Although finding such layer-wise explanations is computationally challenging, we develop efficient algorithms and optimization techniques that incrementally construct and maintain these subgraphs with provable approximation guarantees. 
Extensive experiments on synthetic and real-world benchmarks demonstrate the effectiveness and efficiency of \slice, and illustrate its practical utility in supporting model debugging.
\end{abstract}

\begin{CCSXML}
<ccs2012>
   <concept>
       <concept_id>10010147.10010257.10010293.10010294</concept_id>
       <concept_desc>Computing methodologies~Neural networks</concept_desc>
       <concept_significance>500</concept_significance>
       </concept>
   <concept>
       <concept_id>10002950.10003624.10003633.10010917</concept_id>
       <concept_desc>Mathematics of computing~Graph algorithms</concept_desc>
       <concept_significance>500</concept_significance>
       </concept>
 </ccs2012>
\end{CCSXML}

\ccsdesc[500]{Computing methodologies~Neural networks}
\ccsdesc[500]{Mathematics of computing~Graph algorithms}

\keywords{Graph neural networks, Explainable AI, Model-slicing}

\maketitle

\section{Introduction}
\label{sec:intro}

Graph Neural Networks (\gnns) have demonstrated promising performances in various applications, such as recommender systems~\cite{chen2023heterogeneous}, knowledge graph reasoning~\cite{zhang2023adaprop}, and social network analysis~\cite{sankar2021graph}.
A \gnn is a model $\M$ that converts an input graph $G$ and its initial node feature matrix $Z^0$ into a new numerical matrix $Z$, which can be further post-processed to produce task-specific answers, such as node classification and graph classification.

\vspace{.5ex}
However, predictive accuracy alone does not guarantee the trustworthiness and reliability of graph learning; transparency of the inference process is also required~\cite{arenas2024data}. 
Understanding how a \gnn arrives at a decision is therefore essential for model provenance and tracking. For example, a provenance-oriented analysis~\cite{arenas2024data} may ask ``\textit{What/Which}'' questions to identify which input features are relevant, ``\textit{How}'' questions to trace how features transform across layers, or ``\textit{Where/When}'' questions to determine which layers contribute most during inference. 
Such finer-grained, ``layer-wise'' interpretation, as also practiced in other deep models, e.g., in CNNs~\cite{iqbal2022visual} and DNNs~\cite{ullah2021explaining}, further benefits model optimization and debugging~\cite{adebayo2020debugging}.

\vspace{.5ex}
Despite the evidential need, the study of layer-wise interpretation for \gnns, and how it benefits \gnn applications, still remains in its infancy. Consider the following example.

\begin{figure}[tb!]
\vspace{1ex}
\centering
\centerline{\includegraphics[scale=0.18]{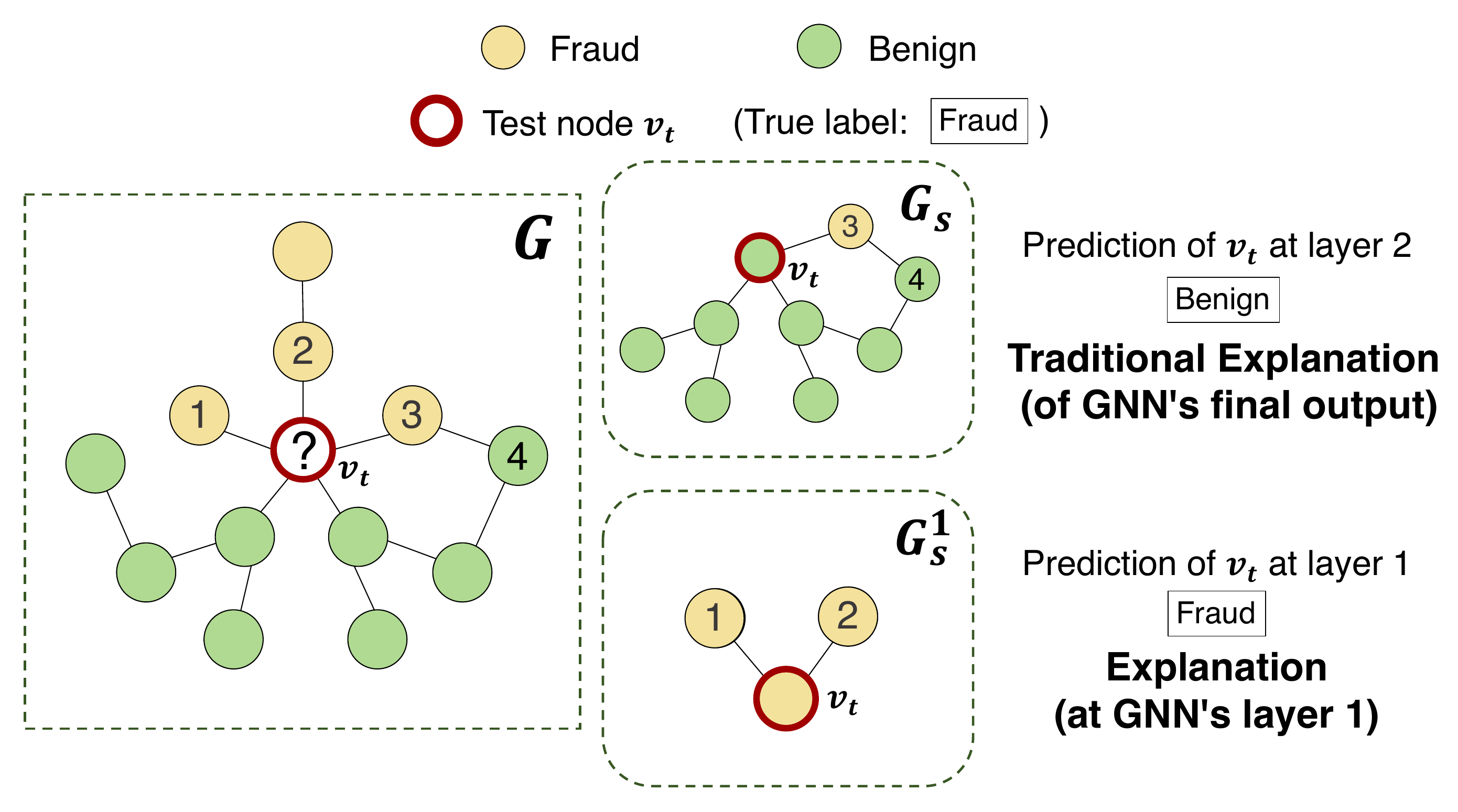}}
\vspace{-3ex}
\caption{Generating layer-wise explanations for GNN diagnosing for spam review detection~\cite{dou2020enhancing}.}
\label{intro}
\vspace{-3ex}
\end{figure}

\begin{example}
\label{exa-motivation}
\vspace{-1mm}
A fraction of a spam review network $G$~\cite{dou2020enhancing} is illustrated in Figure ~\ref{intro}, where nodes are reviews and edges indicate associations such as shared reviewers, common products, or temporal links. 
A \gnn classifies reviews as either ``fraud'' (yellow) or ``benign'' (green), with fraud reviews exhibiting deceptive or spam-like behavior.

Observing that a fraud review $v_t$ is incorrectly labeled as ``benign'', the analyst may want to diagnose the inference process of the \gnn $\M$. 
Several ``diagnose'' queries, analogous to those in CNNs and DNNs~\cite{iqbal2022visual,ullah2021explaining}, can be posed:
\tbi 
\item ``\textbf{Q1}: {\em At which layer (When) did $\M$ first err}?'' Identifying the first erroneous layer enables ``stop-on-first-error'' inspection and early intervention.; 
\item ``\textbf{Q2}: {\em Which fraction of $G$ (subgraphs) are responsible for this first mistake}?'' 
This reveals the nodes, edges, and features that directly influence the decision;
\item ``\textbf{Q3}: {\em How did the influence propagate via subsequent layers}?'' 
This traces how the identified influences evolve and spread during inference.
\ei

A closer inspection of $\M$'s inference process reveals the following.

\sstab 
(1) For \textbf{Q1}, by treating each layer as an ``output layer'', we can ``segment'' the inference process and analyze the node embedding of $v_t$ at a specific layer to assess if the accumulated neighborhood impact leads to an incorrect label. 
The established strategy that aggregates localized neighbors such as ``soft labels''~\cite{yang2025soft} to track node influence also justifies this intuition. 
For example, $v_t$ is correctly classified as ``fraud'' at layer 1 but is misclassified as ``benign'' at layer 2, alerting a first error at layer $2$ for further investigation. 

\sstab 
(2) For \textbf{Q2}, existing \gnn explainers~\cite{ying2019gnnexplainer,yuan2021explainability,huang2022graphlime,schlichtkrull2020interpreting,gui2023flowx} such as GNNExplainer~\cite{ying2019gnnexplainer} (see Related Work) provide factual or counterfactual subgraph explanations that are ``faithful'' to the label. 
However, they typically focus on the final output and do not adapt naturally to intermediate layers as derived for \textbf{Q1}. 
 The influential subgraph at layer 1, $G^1_s$, can differ substantially from that at layer 2, $G_s$ (in Figure~\ref{intro}), reflecting distinct local influences and feature structures that explain the layer-specific predictions.

\sstab 
(3) To answer \textbf{Q3}, one needs to also necessarily annotate the explanation structures by distinguishing between influencing nodes and edges that contribute most to the output at targeted layers and those that are only “connecting” and merely propagating such influence during the inference. 
For example, the graph $G_s$ may indicate that the first mistake is due to accumulated influence from a majority of ``benign'' 2-hop neighbors of $v_t$ at layer $2$, while an adjacent edge of $v_t$ merely propagates this influence (e.g., node $3$ propagates information from node $4$), hence suggesting further investigation for the ``How'' question \textbf{Q3}. Prior \gnn explainers generally do not separate  these roles.
\end{example}
\vspace{-0.5ex}

The above example calls for a layer-wise explanation framework for \gnns that can efficiently produce concise, consistent explanations for intermediate-layer outputs during inference. 
We aim to address the following questions:  
(1) {\em How to define explanation structures for layer-wise \gnn explanations?} 
(2) {\em How to measure the explainability of layer-wise explanations?} 
and (3) {\em How to efficiently generate layer-wise \gnn explanations?}
Existing \gnn explainability methods do not fully answer these questions. 

\vspace{1ex}
Our main contributions are as follows.
\vspace{-1ex}
\begin{itemize}
    \item We extend \gnn explanation to its layer-wise counterpart. To this end, we introduce the "model slice" concept and a novel explanatory subgraph structure, which together enable precise analysis of a \gnn's layer-wise influence.
    
    \item We present \slice, a layer-wise \gnn explanation framework to generate explanatory subgraphs at different layers. To guide the generation process, we propose a bi-criteria quality measure that integrates message-passing influence and embedding diversity. 

    \item We present training-free post-hoc explanation generation algorithms to discover connected subgraph explanations for specific model layers. We propose an efficient $\frac{1}{2}$-approximation generation algorithm for single-node single-source layers and generalize it to multi-node multi-source settings. 

    \item We validate \slice on six benchmarks, demonstrating its efficiency and effectiveness. Crucially, it uncovers \gnn insights that facilitate downstream tasks like debugging and optimization.
\end{itemize} 
\section{Related Work}
\vspace{-1ex}

\stitle{Explainability of GNNs}. 
Several approaches have been proposed to explain \gnns' predictions~\cite{ying2019gnnexplainer,yuan2021explainability,huang2022graphlime,schlichtkrull2020interpreting,gui2023flowx, huang2025sehg, zhang2024adversarial}, primarily by identifying important nodes, edges, node features, subgraphs, and message flows. 
A common strategy is to monitor the changes in prediction under input perturbations. 
For example, GNNExplainer~\cite{ying2019gnnexplainer} learns soft masks over edges and node features to maximize mutual information between original and masked graph predictions. \subx~\cite{yuan2021explainability} efficiently samples explanatory subgraphs using pruning-based Monte Carlo Tree Search (MCTS) together with Shapley-value estimates. 
In contrast, SAME~\cite{ye2023same} extends this with a two-phase framework that first identifies multiple important substructures and then aggregates them into a final explanation. 
Other lines of work use surrogate models to approximate \gnn predictions and derive explanations from the surrogate~\cite{huang2022graphlime, duval2021graphsvx}. 

Despite their strengths, these methods are not directly suitable for layer-wise explanations. 
First, they typically focus on final outputs, failing to capture the progressive decision formation across layers. Second, they do not distinguish between influence sources and structural propagators. Third, they cannot explicitly explain intermediate-layer outputs..

\vspace{-1mm}
\stitle{Layer-wise GNN Analysis}. 
A smaller body of work studies layer-wise attribution for \gnns. 
Decomposition-based methods, \ie {\sf GNN-LRP}~\cite{schnake2021higher} and {\sf Excitation-BP}~\cite{pope2019explainability}, assign importance scores per node at each layer by backpropagating relevance. 
{\sf GraphMask}~\cite{schlichtkrull2020interpreting}, a post-hoc interpretation method, produces differentiable edge masks for each layer but focuses on explaining the final prediction rather than intermediate targets. 
\flowx~\cite{gui2023flowx} models explanations as message flows, which are sequences of layer edges, allowing it to calculate the importance of specific message-passing steps at each individual layer.
While valuable, these methods share a limitation: they interpret intermediate layers relative to the final prediction rather than providing self-contained explanations for intermediate outputs.

\vspace{-1mm}
\stitle{Model Slicing}. Model slicing, with 
a counterpart in program slicing~\cite{xu2005brief}, extracts sub-models for finer-grained analysis ~\cite{blouin2011modeling} and has been applied to efficient inference~\cite{cai2019model, zeng2024powering}, adversarial  detection~\cite{zhang2020dynamic}, mechanistic interpretability \cite{RaukerHCH23}, and visual analytics~\cite{hsu2019couper}. 
This paper makes a first step to adapt model-slicing to the \gnn setting and develop efficient algorithms to generate layer-specific explanations that are both concise and annotated with functional roles. 

\vspace{-1ex}
\section{Graph Neural Networks and Explanation}
\label{sec:preliminaries}
\vspace{-1ex}

\stitle{Graphs}.  
A graph $G = (V,E)$ has a set of nodes $V$ and a set of edges $E$. Each node $v$ carries a tuple, representing its attributes (or features) and their corresponding values. 

We define the following terms: 
(1) The $l$-hop neighbors of a node $v \in V$, denoted $N^l(v)$, are nodes {\em within} $l$ hops from $v$ in $G$. 
(2) The $l$-hop neighbor subgraph, $G^l(v)$, is a node induced subgraph of $G$, 
which is induced by $N^l(v)$. 
(3) For a set of nodes $V_s \subseteq V$, the $l$-hop neighbor subgraph, $G^l(V_s)$, is induced by the union of $l$-hop neighbors of nodes in $V_s$, i.e., $\bigcup_{v \in V_s} N^l(v)$. $G^l(v)$ is always connected, while $G^l(V_s)$ may not be. 
(4) The size of a graph is the total number of nodes and edges, given by $|G| = |V| + |E|$.

\vspace{-0.8ex}
\stitle{Graph Neural Networks}.
\gnns are a class of neural networks that learn to convert an input graph $G$ and its initial node feature matrix $Z^0$ to proper representations. We provide an abstract characterization of a \gnn $\M$ with $L$ layers. 

In general, a \gnn can be considered as a composition 
of two functions: an encoder $f_1$ and a predictor $f_2$.

\sstab 
{\bf (1)} The encoder $f_1$ is a composite function 
$f^L_1\circ f^{L-1}_1\circ \ldots \circ f^1_1(G)$. 
At the $l$-th layer ($l\in [1,L]$), the function $f^l_1$ 
uniformly updates, for each node $v\in V$, 
an input representation from the last layer as: 
\begin{equation}
\vspace{-1ex}
\label{eq-prop}
    z^l_v := \kw{TRANS}\left(\kw{AGG}^l\left({z^{l-1}_j\mid j\in N^l(v)\cup \{v\}}\right)\right)
\end{equation}

Here, $z^l_v$ is the feature representation (``embedding'') of node $v$ at the $l$-th layer ($z^0_v$ is the input feature of $v$); $N_v$ is the set of neighbors of node $v$ in $G$, and $AGG^l$ (resp. $TRANS^l$) represents an aggregation (resp. transformation) function at the $l$-th layer, respectively.

\sstab 
{\bf (2)} Subsequently, a predictor $f_2$ takes the output embeddings $f_1(G)$ = $Z$ and performs a task-specific post-processing $f_2(Z)$, to obtain a final output (a matrix), denoted as $\M(G)$. Here, $f_2$ typically includes a multi-layer perceptron (MLP) that takes the aggregated representations $Z_v$ from the encoder $f_1$, and produces the final output for the specific task, such as node classification or graph classification. Specifically, given a node $v\in V$, the output of $\M$ at $v$, denoted as $\M(G,v)$, refers to the entry of $f_2(Z)$ (\ie $\M(G)$) at node $v$.

\vspace{.5ex}
The above step is an abstraction of the inference process of $\M$. Variants of \gnns instantiate different aggregation and transformation functions to optimize task-specific output. 
A canonical example is the Graph Convolutional Network (\gcn)~\cite{kipf2016semi}, which can be viewed as a first-order spectral approximation that aggregates (normalized) neighbor embeddings and applies a nonlinear transformation, \eg ReLU, as the transformation operator.


\stitle{Explanation Structure of \gnns}.
We start with a notion of explanation. Given a \gnn $\M$ with $L$ layers, a graph $G=(V,E)$, and an output $\M(G,v_t)$ to be explained, an {\em explanation} of $\M(G,v_t)$ is a node-induced, connected subgraph 
$G_s$ = $(V_s\cup V_c, E_s)$ of $G$, where 
\bi 
\item (1) $V_s \subseteq V$ is a set of {\em explanatory nodes} that clarify the output $\M(G,v_t)$ ($v_t \in V_s$)
\item (2) $V_c \subseteq V$ is a {\em connector set} induced by explanatory nodes $V_s$, which is defined as $\bigcup_{v_s \in V_s} C(v_s)$ with $0 \leq |V_c| < |N^L(v_t)|$; here $C(v_s)$ is the set of nodes on the paths from $v_s$ to $v_t$ within $N^L(v_t)$; 
\item In addition, {\em one} of the following conditions holds: \textbf{(1)} $\M(G, v_t)$ = $\M(G_s, v_t)$, \ie $G_s$ is a ``factual'' explanation; or \textbf{(2)} $\M(G,v_t)$ $\neq$ $\M(G\setminus G_s, v_t)$, \ie 
$G_s$ is ``counterfactual''.
\ei 

Here, $G \setminus G_s$ is a graph obtained by removing the edge set $E_s$ from $G$,  while retaining all nodes. The inference process of $\M$ over $G_s$ (or $G \setminus G_s$) determines if $G_s$ is factual (or counterfactual). The input for inference is always a feature matrix $Z^0$ and an adjacency matrix, regardless of graph connectivity. 

\stitle{Remarks}. The above definition is well justified by established \gnn explainers, which generate (connected) subgraphs that are factual~\cite{ying2019gnnexplainer,yuan2021explainability}, counterfactual~\cite{bajaj2021robust}, or both~\cite{chen2024view}, as variants of the above definition. The main difference is that we explicitly distinguish explanatory nodes and connect sets. The added expressiveness allows us to track the varying set of ``influencing'' nodes and edges at different layers (as will be discussed in Section~\ref{sec:extended-explanations}). This is not fully addressed by prior \gnn explainers.
 
\vspace{.5ex}
The rest of the paper makes a case for \gcn-based node classification. For simplicity, we refer to ``class labels'' as ``labels'', and ``\gcn-based classifiers'' as ``\gcn''. Our approach {\em applies to other representative \gnns}, 
as verified in the Appendix~\ref{sec-exp}. 
\section{\slice: Layer-wise Explanations}
\label{sec-slice}

\subsection{Extending Explanations with Model-Slicing}  
\label{sec:extended-explanations}

We start with a notation of the model slice. 

\stitle{Model Slices}. To gain insights into the inner-working of \gnns and to enable a more granular interpretation, inspired by model slices~\citep{blouin2011modeling}, we create ``model slices'' of a trained \gnn $\M$ with selected individual layers. 

Given a \gnn $\M$ as a stack of an $L$-layered encoder $f_1$ and a predictor $f_2$, and a selected layer $l\in [1,L]$, an {\em $l$-sliced model} of $\M$, denoted as $\M^l$, is a \gnn as a stack of an encoder $f^l$ and 
the predictor $f_2$, where $f^l$ is a composite function. 
 \begin{equation}
    \label{quality}
        Z^l=f^l_1\circ f^{l-1}_1\circ \ldots \circ f^1_1(G)
\end{equation}

The inference process of an $l$-sliced model $\M^l$ computes a result consistently by (1) calling a composition $f^l_1\circ\ldots f^1_1(G)$ to generate output embeddings $Z^l$ up to layer $l$, and (2) deriving $z^l_{v_t}\in Z^l$ and applying  $f_2(z^l_{v_t})$ to obtain the result $\M^l(G,{v_t})$, for each node ${v_t}\in V_T$. In our abstraction, every function $f^l_1$ $(l\in[1,L])$ and $f_2$ uniformly takes a $|V|\times d$ embedding matrix as input and produces a matrix of the same size. In practice, we assume that the node features’ dimensionality $d$ remains unchanged across layers.

\begin{example}
Figure~\ref{fig-motivation} (upper part) depicts the architecture of an $l$-sliced model $\M^l$ derived 
from a given \gnn $\M$ with $L$ layers. The encoder of $\M^l$ includes the first $l$ \gnn layers $f^1_1, ..., f^l_1$. The predictor $f_2$ in $\M^l$ remains the same as in $\M$. The model parameters in the corresponding modules are shared between $\M$ and $\M^l$.
\label{slice_example}
\end{example}

\stitle{Explaining \gnns with Model Slicing}. 
We naturally extend \gnn explanations with 
model slicing.
Given a node $v_t\in V_T$ with an output $\M(G,v_t)$ to be explained, a connected subgraph $G^l_s$ of $G$ is an {\em explanation} of $\M(G,v_t)$ {\em at layer $l$}, if one of the following holds: 

\stab 
(1) The subgraph $G^l_s$ is a factual explanation of $\M(G,v_t)$, that is, $\M(G, v_t)$ = $\M(G^l_s, v_t)$; and also (b)  $\M(G, v_t)$ = $\M^l(G, v_t)$ = $\M^l(G^l_s,v_t)$. In this case,  we say that $G^l_s$ is a {\em factual explanation of $\M(G,v_t)$ at layer $l$}; or 

\stab 
(2) The subgraph $G^l_s$ is a counterfactual explanation of $\M(G,v_t)$, that is, $\M(G,v_t)$ $\neq\M(G\setminus G^l_s, v_t)$; and also (b) $\M(G,v_t)$ = $\M^l(G, v_t)$ $\neq\M^l(G\setminus G^l_s, v_t)$. For this case,  we say $G^l_s$ is a {\em counterfactual explanation of $\M(G,v_t)$ at layer $l$}.

Intuitively, we are interested in finding an explanation for an output $\M(G,v_t)$, which can be consistently derived by a sliced model at layer $l$, an ``earlier'' stage of inference of \gnn $\M$. 
We also provide a detailed discussion in the Appendix~\ref{sec-discuss}, where we show that this approach yields faithful explanations that align with the model’s true decision-making process.

\begin{figure}[tb!]
\centering
\centerline{\includegraphics[scale=0.19]{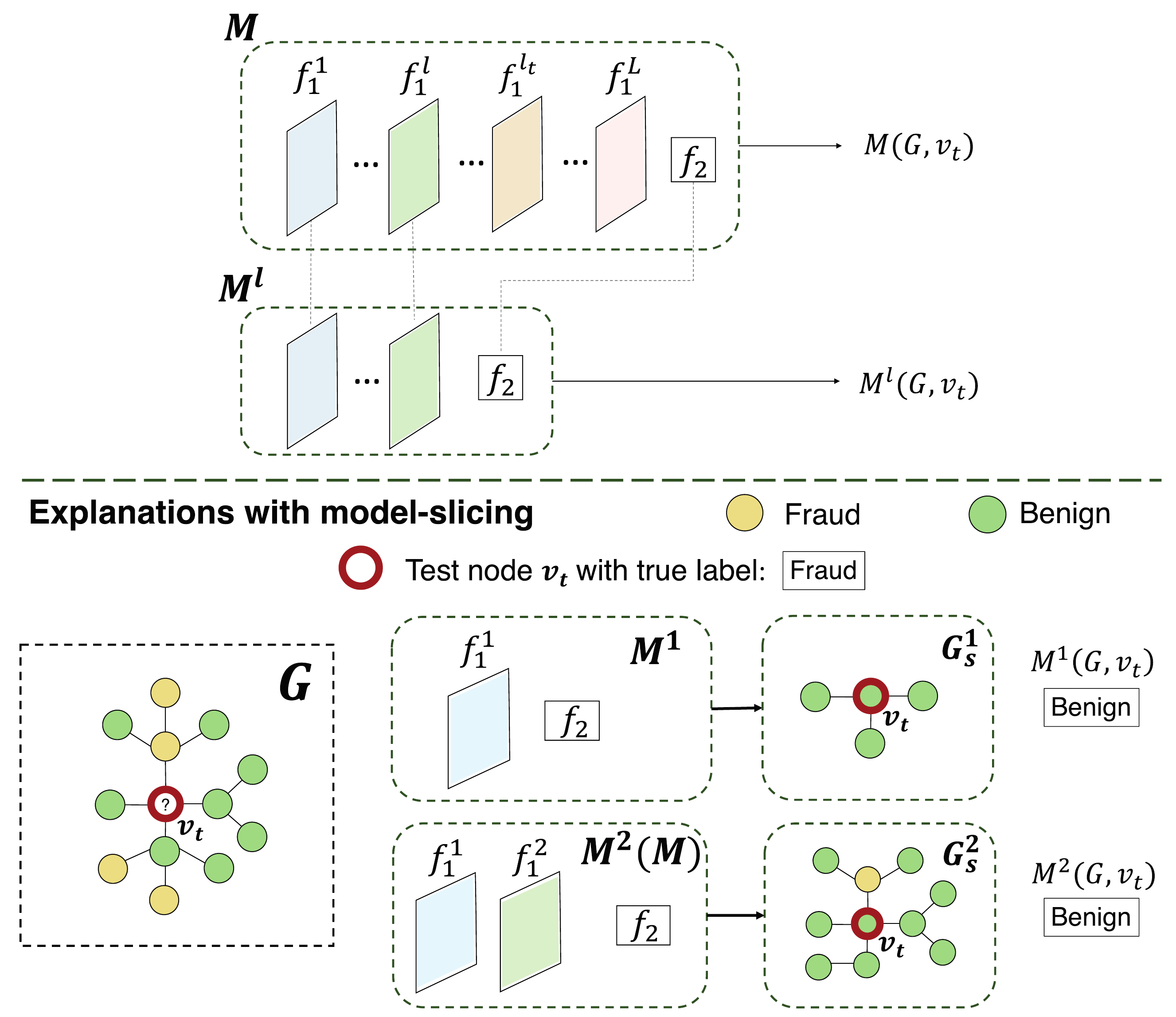}}
\vspace{-1ex}
\caption{Example of explanations with model-slicing. The upper part depicts the architecture of an $l$-sliced model $\M^l$, and the lower part shows factual explanations at each layer of $\M$ for diagnosing purposes.}
\label{fig-motivation}
\vspace{-2ex}
\end{figure}

\begin{example}
\label{example2}
Consider a two-layer \gnn-based node classifier $\M$ as in Example~\ref{slice_example} (lower part of Figure~\ref{fig-motivation}). The test node $v_t$ in graph $G=(V,E)$ is misclassified as ``Benign'' by $\mathcal{M}$, and its two ``sliced'' counterparts $\mathcal{M}^1$, and $\mathcal{M}^2$, at layer $1$ and layer $2$, respectively.  Here, $G^1_s$ and $G^2_s$ are factual explanations for the incorrect prediction at layers 1 and 2, respectively. While $G^2_s$ ensures consistency between $\mathcal{M}(G,v_t)$ and $\mathcal{M}(G^2_s,v_t)$ at the final layer, $G^1_s$ must maintain consistency in predictions for $v_t$ in both $\mathcal{M}$ and the sliced model $\mathcal{M}^1$. Specifically, $\mathcal{M}^1$ assigns $v_t$ the label ``Benign'' with $G^1_s$, aligning with $\mathcal{M}$'s final prediction. Inspecting the two explanations, $G^1_s$ and $G^2_s$, one can gain insights by observing the specific fraction of $G$ that may be responsible for an incorrect label ``Benign'' of node $v_t$. Better still, the explanations $G^1_s$ and $G^2_s$, putting together, suggest a ``chain of evidence'' that clarifies the misclassification of $v_t$ by $\M$ at early stage (layer 1) and its subsequent layers. 
\end{example}

\subsection{Explainability Measure}
\label{sec-quality}
While factual and counterfactual explanations are adopted to characterize ``relevant data'' that clarify \gnn outputs, it is necessary to quantify their explainability for specific diagnosing and interpreting needs. We adopt a novel bi-criteria measure below, in terms of label influence and embedding diversity, each with justification.

\stitle{Bi-criteria Explainability Measure}. 
\label{measure}
Given a graph $G=(V,E)$,  a \gnn $\M$ with $L$ layers, a specific layer number $l$ ($l\in [1,L]$), and a designated node $v_t$ for which the output $\M(G,v_t)$ needs to be explained, the {\em explainability} of an explanation subgraph $G^l_s$, with the explanatory node set $V^l_s$, at layer $l$ is defined as: 
    \begin{equation}
    \label{quality}
        f(G^l_s)=\gamma I(V^l_s)+(1-\gamma)D(V^l_s)
    \end{equation}
where (1) $I(V^l_s)$ quantifies the relative influence of $V^l_s$; (2) $D(G^l_s)$ measures the embedding diversity of $V^l_s$; and (3) $\gamma\in [0,1]$ is a balance factor. Note that the explainability $f(G^l_s)\in [0,1]$. 

We next introduce the functions $I(\cdot)$ and $D(\cdot)$.

\eetitle{Relative influence}. We extend \gnn feature propagation ~\cite{chen2024view,zhang2021grain} to layer-wise analysis. We introduce {\em relative influence}, a measure to evaluate the influence of a node $v$ at a specific layer to a node $u$ with output $\M(G,u)$. The relative influence of a node $v$ at layer $l$ on $u$ is quantified by the L1-norm of the expected Jacobian matrix: 
\begin{equation}
I_1(v,u) = \big|\big| \mathbf{E}[\frac{\partial Z_{v}^l}{\partial Z_{u}^0}] \big| \big|_1
\end{equation}

Further, we define the normalized relative influence as $I_2(v, u) = \frac{I_1(v,u)}{\sum_{w \in V} I_1(w,u)}$. We say that the node $u$ is {\em influenced} by $v$ at layer $l$, if $I_2(v,u) \ge h$, where $h$ is a threshold for influence. The Jacobian matrix can be approximately computed in \PTIME~\cite{chen2015measurement,xu2018representation}. 

Given an explanation $G^l_s$ with node set $V^l_s$, and $N^l(v)$ as $l$-hop neighbors of $v \in V^l_s$, the {\em relative influence set} of $v$ is $c(v) = \{u|I_2(v,u) \ge h, u\in N^l(v)\}$. Intuitively, it refers to the nodes that can be ``influenced'' by a node $v$ in the explanation $G^l_s$; the larger, the more influencing node $v$ is. 

We next introduce a node set function $I(V^l_s) = \frac{1}{|V|}\left|\bigcup_{v\in V^l_s}c(v)\right|$. Given an explanation $G^l_s$, it measures accumulated influence as the union of $c(v)$ for all $v\in V^l_s$. The larger, the more influential $G^l_s$ is. 

\eetitle{Embedding diversity}. Optimizing layer-wise explanations in terms of influence analysis alone may still lead to biased, ``one-sided'' explanations that tend to be originated from the similar type of most influential nodes across layers. We thus consider diversification of explanation generation to mitigate such bias, and enrich feature space~\cite{zhang2024ficom}. Given an explanation $G^l_s$ at layer $l$ of an output $\M^l(G,u)$ of a node $u$, we define the {\em diversity set} $r(v)$ as the nodes in $N^l(v)$ with a distance score $d$ above threshold $\theta$, \ie\ $r(v)=\{u|d(Z^l_{v}, Z^l_{u}) \geq \theta, u \in N^l(v)\}$. The distance function $d$ can be Euclidean distance of node embeddings~\cite{li2019approximate}, and $D(V^l_s)$ quantifies diversity as the union of $r(v)$ for all $v\in V^l_s$: $D(V^l_s) = \frac{1}{|V|}\left|\bigcup_{v\in V^l_s}r(v)\right|$.

Putting these together, we favor explanation with better explainability, which can contribute (1) higher accumulated influence relative to the result of interests from current layers, and (2) meanwhile reveals more diverse nodes on its own, in terms of node embedding. Our method is configurable: other influence and diversity (\eg mutual information~\cite{ying2019gnnexplainer}) or explanability measures (\eg fidelity~\cite{funke2022zorro}) can be used to generate explanations for task-specific needs. To demonstrate the poor performance of relying on a single metric, we conducted a detailed ablation study in our experimental section.

\begin{figure}[tb!]
\centering
\centerline{\includegraphics[scale=0.25]{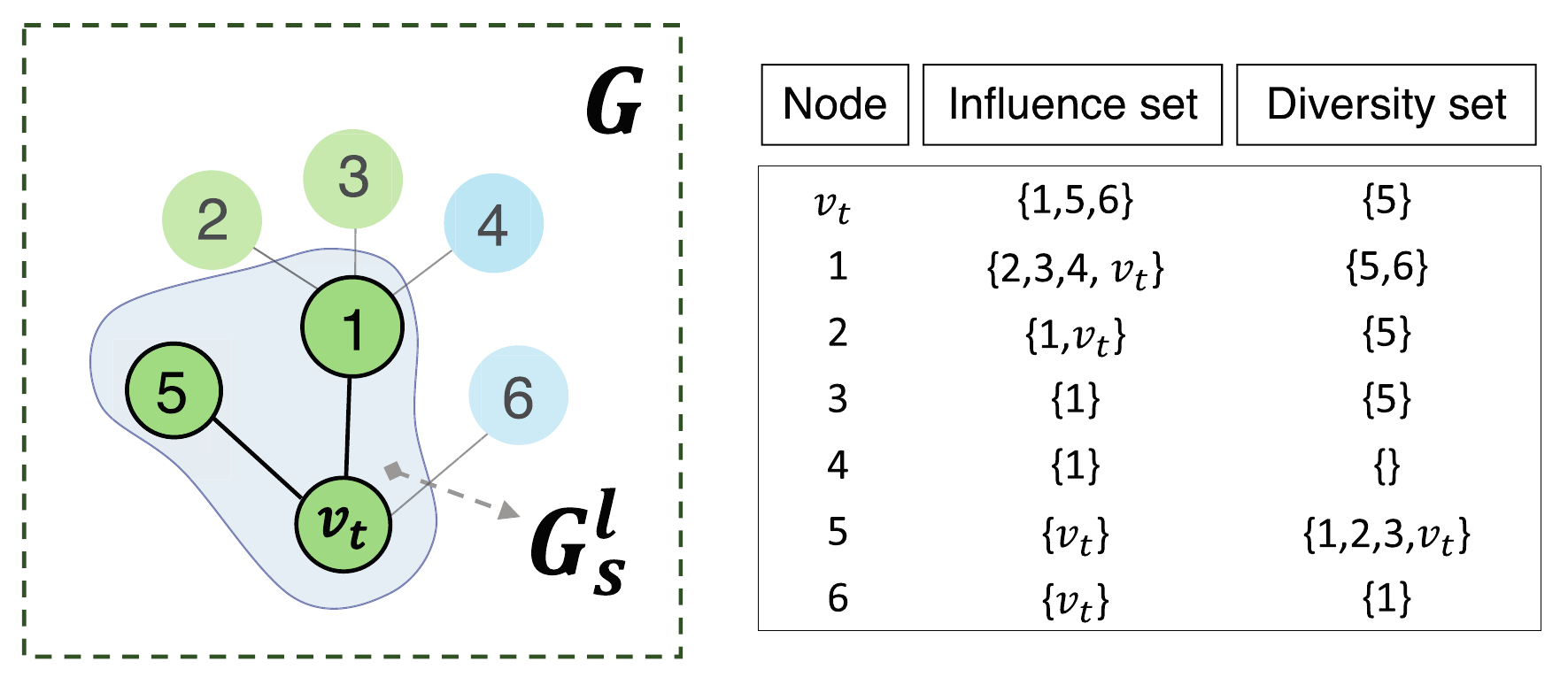}}
\vspace{-3mm}
\caption{Explanation with high influence and high embedding diversity (k=3)}
\label{qualityexample}
\vspace{-4mm}
\end{figure}

\begin{example}
Figure ~\ref{qualityexample} showcases an explanation with high explainability for $v_t$. The influence set and diversity set of the explanation $G^l_s$ with node set $V^l_s=\{1,5,v_t\}$ is $\{1,2,3,4,5,6,v_t\}$ and $\{1,2,3,5,6,v_t\}$. Among all the possible explanations of $G$ size bounded by 3, $G_s$ is favored due to the maximum quality score calculated in Equation~\ref{quality}.
\end{example}

\subsection{Explanation Generation Problem}
\label{sec-problem}

\stitle{Configuration}. 
We next formulate the problem of explanation generation with model slicing. We characterize the explanation request as a {\em configuration} $\C$ = $(G,\M, v_t, \L, l_t, k)$, as follows:
\begin{itemize} 
\item a graph $G$, a \gnn $M$ with $L$ layers, where the $l$-th layer is indexed with a layer number $l$;  
\item  a {\em target layer} $l_t$ and a {\em target node} $v_t$, to specify a designated output $\M^{l_t}(G,v_t)$ to be explained; 
\item  {\em source layers} $\L$, a set of specified layer numbers, such that for each layer $l\in\L$ $(l\in[1,L])$, an $l$-sliced model $\M^l$ is derived to provide explanations for $\M^{l_t}(G,v_t)$;  
\item $k$ is a size bound w.r.t. the number of explanatory nodes.  
\end{itemize}

Given a target result $\M^{l_t}(G,v_t)$ to be explained, the explanation generation with model-slicing is to generate, at each layer $l\in\L$, an optimal explanation $G^{*l}_s$ of $G$ with an explanatory node set $V_s$, for the output $\M^{l_t}(G,v_t)$, such that 
\begin{equation}
G^{*l}_s = \argmax_{|V_s|\leq k} f(G^l_s)
\end{equation}

The configuration supports progressive analysis of \gnn outputs:  (1) For ``progressive'' diagnosis, answering why $\M$ assigns a wrong label $\M(G,v_t)$ (Example~\ref{example2}), set $\C$ as $(G, \M, v_t, \{1,\ldots,L\}, L, k)$, where $L$ is the output layer. (2) To analyze $\M$'s behavior at selected layers for target layer $l_t$, set $\C$ as $(G, \M, v_t, \L, l_t, k)$. (3) Conventional \gnn explanation is a special case with $\L = \{L\}$ and $l_t = L$.

\stitle{Complexity}. We begin hardness analysis by showing that it is feasible to check if a subgraph is an explanation.

\begin{lem}
\label{lm-verify}
Given a configuration $\C$ = $(G,\M, v_t, \L, l_t, k)$, and a subgraph $G_s$ of $G$, it is in \PTIME to verify if $G_s$ is an explanation \wrt $\M^{l_t}(G,v_t)$ at layer $l$.
\end{lem}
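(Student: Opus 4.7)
The plan is to show that all conditions defining an ``explanation at layer $l$'' can be checked by (i) a few graph-structural tests on $G_s$ and (ii) a constant number of forward passes of $\M$ and $\M^l$, each of which runs in polynomial time in $|G|$ and $L$.

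First, I would verify the structural part of the definition given in Section~\ref{sec:extended-explanations}. Check that $G_s$ is a node-induced subgraph of $G$ by scanning the edges of $G$ restricted to the node set of $G_s$ and comparing with $E_s$; check connectivity by a single BFS/DFS from $v_t$; and check that the connector set $V_c$ is correctly derived as $\bigcup_{v_s \in V_s} C(v_s)$, where each $C(v_s)$ can be obtained by enumerating vertices on paths from $v_s$ to $v_t$ inside $N^L(v_t)$ via a bounded-depth search. Finally, check the size constraint $0 \le |V_c| < |N^L(v_t)|$. Each of these tests is clearly in \PTIME in $|G|$ and $L$.

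Next, I would verify the inference-based conditions. Recall that $\M$ is a stack of $L$ layers of the form in Equation~\ref{eq-prop} followed by the predictor $f_2$, and $\M^l$ reuses the first $l$ layers of $\M$ together with $f_2$. A single forward pass of $\M$ on an input of size $|G|$ costs $O(L \cdot \mathrm{poly}(|G|,d))$: each layer performs neighborhood aggregation (linear in $|E|$) and a transformation (polynomial in $d$). The same bound holds for $\M^l$ with $l \le L$. To verify the factual case, I would compute $\M(G,v_t)$, $\M(G^l_s,v_t)$, $\M^l(G,v_t)$, and $\M^l(G^l_s,v_t)$, then test the three equalities required. For the counterfactual case, I would additionally compute $\M(G\setminus G^l_s, v_t)$ and $\M^l(G\setminus G^l_s, v_t)$ (where $G\setminus G^l_s$ is obtained in linear time by deleting the edges $E_s$) and test the two inequalities. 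If either the factual or the counterfactual conjunction of conditions holds, $G_s$ is declared a valid explanation.

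The only step worth checking carefully is the cost of a GNN forward pass, since this is the operation applied multiple times. Because $L$, $d$, and the per-layer aggregation/transformation functions are all fixed parts of $\M$, each forward pass is polynomial in $|G|$, and only a constant number (at most six) of such passes are required. Combining this with the polynomial-time structural checks, the total verification cost is polynomial in $|G|$ and $L$, which establishes the claim. No step poses a substantive obstacle; the proof is essentially a careful unfolding of the definition combined with the standard observation that \gnn inference is polynomial.
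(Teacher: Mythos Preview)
Your proposal is correct and follows essentially the same approach as the paper's proof sketch: both argue that verification reduces to a constant number of \gnn forward passes (each polynomial in $|G|$) together with equality/inequality tests. Your version is simply more explicit, additionally spelling out the polynomial-time structural checks (induced-subgraph, connectivity, connector-set derivation) that the paper leaves implicit.
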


\begin{proofS}
We consider a configuration \( C \) for a fixed GNN model \( \M^{l_t} \) at layer \( l_t \). As shown in cost analyses for GNNs, \( \M^{l_t} \) is constant with the same architecture and weights, ensuring deterministic inference in PTIME. To verify \( G_s \), we evaluate a Boolean query for "factual" or "counterfactual" conditions at layer \( l_t \), involving limited calls to \( \M \) and equality tests, like \( \M(G,v_t) = \M^{l_t}(G_s, v_t) \), all within \PTIME.
\end{proofS}

\vspace{-2mm}
\begin{theor} 
\label{hardness}
The decision problem of Explanation generation with model slicing is \NP-hard.
\end{theor}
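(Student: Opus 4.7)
The plan is to establish NP-hardness via a polynomial-time reduction from Maximum $k$-Coverage, which is NP-complete: given a universe $U$, a family $\{S_1, \ldots, S_n\}$ of subsets of $U$, and integers $k, t$, decide whether $k$ of the sets cover at least $t$ elements. The decision version of our problem asks, for a configuration $\C$ and threshold $\tau$, whether an explanation $G^l_s$ with $|V_s| \le k$ and $f(G^l_s) \ge \tau$ exists. The crucial observation is that taking $\gamma = 1$ in Eq.~\ref{quality} collapses $f(G^l_s)$ to $\tfrac{1}{|V|}\bigl|\bigcup_{v \in V^l_s} c(v)\bigr|$, which is structurally the coverage objective; it therefore suffices to realize an arbitrary set system $\{S_i\}$ as the influence sets $\{c(u_i)\}$ of a valid \gnn configuration.

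Given an instance $(U, \{S_i\}, k, t)$, I would build $G$ with a hub node $v_t$, set-nodes $u_1, \ldots, u_n$, and element-nodes $e_1, \ldots, e_{|U|}$, connecting $v_t$ to every $u_i$ and connecting $u_i$ to $e_j$ whenever $e_j \in S_i$. The hub makes any choice $\{v_t, u_{i_1}, \ldots, u_{i_k}\}$ automatically connected, so the connectedness requirement on explanations is discharged for free. I would take $\M$ to be a $2$-layer \gcn whose predictor $f_2$ outputs a fixed label $\ell$ for every input embedding (e.g., zero final-layer weights with a positive bias on class $\ell$); this trivial predictor ensures $\M(G, v_t) = \M(G^l_s, v_t) = \M^l(G, v_t) = \M^l(G^l_s, v_t)$ for every subgraph containing $v_t$, so the factual condition is satisfied by construction. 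The remaining parameters are set to source and target layer $1$, budget $k{+}1$ (absorbing the forced $v_t \in V_s$), $\gamma = 1$, and $\tau = (t + \mathrm{const})/|V|$, where the constant accounts for $c(v_t)$.

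The main obstacle is engineering the weights of $f^1_1$ so that the normalized relative influence $I_2(u_i, e_j)$ exceeds the threshold $h$ precisely when $e_j \in S_i$, yielding $c(u_i) = S_i$. For a standard \gcn layer $z^1 = \sigma(\hat{A}\, z^0 W)$ with symmetric-normalized adjacency $\hat{A}$, the Jacobian $\partial z^1_{u_i}/\partial z^0_{e_j}$ is proportional to $\hat{A}_{u_i, e_j}$ times $W$, which vanishes on non-neighbors and is strictly positive on neighbors. Choosing one-hot input features and $W = I$ makes the $I_1$ values depend solely on the graph structure, and picking $h$ below the minimum positive normalized-adjacency entry ensures $c(u_i)$ coincides with the element-neighborhood of $u_i$, i.e., with $S_i$. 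With $c(u_i) = S_i$ (up to the constant contribution of $c(v_t)$ already absorbed into $\tau$), the explanatory set $\{v_t, u_{i_1}, \ldots, u_{i_k}\}$ satisfies $f(G^l_s) \ge \tau$ iff $|S_{i_1} \cup \cdots \cup S_{i_k}| \ge t$. The reduction is polynomial in $|U|+n$, establishing \NP-hardness, and combined with Lemma~\ref{lm-verify} the decision problem is in fact \NP-complete.
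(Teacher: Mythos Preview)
Your proposal is correct and follows essentially the same approach as the paper: both reduce from Maximum Coverage, build the identical hub-plus-bipartite graph (target node $v_t$, set-nodes $u_i$, element-nodes $e_j$), set $\gamma=1$ so that the objective collapses to the coverage function, and argue that $c(u_i)$ coincides with $S_i$. The only cosmetic differences are that the paper sets $l_t=2$, $\L=\{1\}$, budget $k$, and trains a vanilla two-layer \gnn on a single label, whereas you set source and target layer both to $1$, budget $k{+}1$, and hard-wire a constant predictor; your Jacobian argument for realizing $c(u_i)=S_i$ is also more explicit than the paper's.
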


\vspace{-1mm}
\begin{proofS}
We reduce the maximum coverage problem~\cite{papadimitriou1988optimization} to our problem by constructing a graph \(G\) where nodes represent sets and elements, and edges connect elements to their sets and sets to a target node \(v_t\). A 2-layer GNN \(\M\) is trained to label \(v_t\) correctly. A size-\(k\) factual explanation for \(v_t\) exists if and only if there is a solution to the maximum coverage problem with \(|\bigcup_{S \in \S'} S| \geq b\). Since the decision problem of maximum coverage problem is NP-hard, our problem is also NP-hard.
\end{proofS}

We next introduce novel and efficient algorithms for layer-wise explanation generation, with guarantees on the quality. 

\begin{algorithm}[tb!]
    \renewcommand{\algorithmicrequire}
    {\textbf{Input:}}
    \renewcommand{\algorithmicensure}
    {\textbf{Output:}}
    \caption{:~\slicess (single node, single source layer)}
    \begin{algorithmic}[1]
        \REQUIRE a configuration $\C$ = $(G,\M, v_t, \{l\}, l_t, k)$;  
        \ENSURE an explanation $G^l_s$ \wrt 
        $\M^{l_t}(G,v_t)$ at \gnn layer $l$. 
        \STATE initialize $G^l_{v_t}=(V^l_{v_t}, E^l_{v_t})$ as the $l$-hop subgraph in $G$
        \STATE set $V_s := \emptyset$; $V_c := \emptyset$
        \WHILE{$|V_s| < k$ and $V_{v_t}^l \backslash V_s\neq\emptyset$}
            \STATE expand $V_s$ by pair-wise node selection in $V^l_{v_t}$
        \ENDWHILE
        \STATE select $V_c\in V^l_{v_t}$ to keep induced subgraph connected 
        \STATE $V_s'=V_s\cup V_c$
        \STATE set $G^l_{s}$ as the subgraph derived from node set $V_s'$; 
        \IF{\kw{Verify}($G^l_{s}$, $C$) = true} 
        \RETURN $G^l_s$; 
        \ENDIF
        \WHILE{\kw{Verify}($G^l_{s}$, $C$) = false and $V^l_{v_t}\backslash V_s'\neq \emptyset$}
            \STATE replace the lowest-scoring node in $V_s$ with highest-priority node in $V^l_{v_t}\backslash V_s'$ while ensuring connectivity by BFS
            \STATE set $G^l_s$ as the subgraph derived from node set $V_s'$;
           \IF{\kw{Verify}($G^l_s$, $C$) = true}  \RETURN  \ttrevise{$G^l_s$}; 
        \ENDIF
        \ENDWHILE
        \RETURN $G^l_{v_t}$.
    \end{algorithmic}
    \label{alg:greedy}
\end{algorithm}

\label{sec-algorithm}

\vspace{-1mm}
\subsection{Generating Model-slicing Explanation}
Our first algorithm generates layer-wise  explanations for a single node at a designated target layer.
\label{sec-quality}

\begin{theor}
\label{theor-approx}
Given a configuration $\C$ that specifies a single node $v_t$ and a single source layer $l$, there is a $\frac{1}{2}$-approximation for the problem of explanation generation with model-slicing,  which takes $O(k|G^l_{v_t}|\log|V^l_{v_t}|+L|V^l_{v_t}|^2)$ time.
\end{theor}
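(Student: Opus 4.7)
The plan is to split the argument into an approximation-ratio piece and a running-time piece, and to anchor the former on Lemma~2 together with a reduction to a known diversified submodular maximization problem. Correctness of the returned subgraph as a valid explanation is essentially immediate from the algorithm's structure (lines~6--9 induce the connector set and line~14 invokes \verifyss), so I would dispatch it in one sentence and focus on the ratio and the cost.

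For the $\frac{1}{2}$-approximation, I would first invoke Lemma~2 to conclude that $f(\cdot)$ is a non-negative, monotone, submodular set function on the explanatory node set $V^l_s$. The generation phase (lines~3--5 of Algorithm~1) chooses explanatory nodes by a pair-wise greedy maximizing the marginal gain of $f$, which matches Gollapudi--Sharma's Max-Sum Diversification template once we identify $I(\cdot)$ with the coverage component and $D(\cdot)$ with the diversity component. Their result then delivers a $\frac{1}{2}$-approximation against the optimal $V^*_s$ of size at most $k$. Since $V_c$ is determined entirely by $V_s$ via shortest-path connectors and does not enter the explainability objective, this bound transfers verbatim to $f(G^l_s)$; as the excerpt notes, the ratio is claimed for the algorithm without the post-processing loop, so the analysis need not argue about the replacement step in lines~10--15.

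For the running time, I would bookkeep the stages of Algorithm~1 separately. Stage~(i), constructing $G^l_{v_t}$ by an $l$-hop BFS from $v_t$, costs $O(|G^l_{v_t}|)$. Stage~(ii), precomputing the normalized relative-influence sets $c(v)$ via the \PTIME\ Jacobian approximation of~\cite{chen2015measurement,xu2018representation} across $L$ layers together with the embedding-distance diversity sets $r(v)$, costs $O(L|V^l_{v_t}|^2)$. Stage~(iii), the greedy loop of up to $k$ rounds, maintains marginal gains of $f$ in a max-heap over candidates with lazy re-evaluation, costing $O(|G^l_{v_t}|\log|V^l_{v_t}|)$ per round and $O(k|G^l_{v_t}|\log|V^l_{v_t}|)$ in total. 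Stage~(iv), connectivity restoration and the \PTIME\ \verifyss\ call granted by Lemma~1, is dominated by the previous stages. Summing yields the stated $O(k|G^l_{v_t}|\log|V^l_{v_t}| + L|V^l_{v_t}|^2)$ bound.

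The main obstacle will be the reduction to Max-Sum Diversification. The coverage term $I(\cdot)$ is textbook monotone submodular via its union-of-sets form, but $D(\cdot)$ is expressed as a union over threshold-neighborhoods $r(v)$ rather than as the classical sum-of-pairwise-distances on which Gollapudi--Sharma's analysis is written. I would therefore need to verify that the pair-wise greedy retains the $\frac{1}{2}$-ratio under the set-union formulation; if the mapping is not verbatim, I would fall back on a direct argument that pair-wise greedy on a non-negative monotone submodular $f = \gamma I + (1-\gamma) D$ still achieves $\frac{1}{2}$, using the submodular exchange inequality applied to each greedy pair against the optimal solution's residual. Everything else -- the locality reduction to $G^l_{v_t}$, the heap-based greedy, and the Jacobian cost -- is standard bookkeeping.
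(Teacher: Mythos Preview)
Your approximation argument is essentially the paper's: both reduce the generation phase to Gollapudi--Sharma's Max-Sum Diversification and invoke the pair-wise greedy's $\frac{1}{2}$-ratio, with the guarantee stated for the algorithm sans post-processing. The paper handles the mismatch you correctly flag---that $D(\cdot)$ is a union-of-neighborhoods rather than a sum of pairwise distances---by silently swapping $f$ for a surrogate $f'(V_s)=\gamma\sum_{v}|c(v)|+(1-\gamma)\sum_{u,v}\mathbf{1}[d(Z_u^l,Z_v^l)\ge\theta]$ that fits the facility-dispersion template verbatim; your fallback exchange-inequality plan is arguably more careful about what must be checked, but the high-level route is the same.

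Where you diverge is in the running-time accounting. You charge the quadratic term to precomputing the Jacobian-based influence sets and embedding-distance neighborhoods across $L$ layers; the paper instead attributes an $l|V^l_{v_t}|^2$ term (lowercase $l$, the source layer) to the post-processing loop of lines~10--15---at most $|V^l_{v_t}|$ replacement rounds, each paying an $O(l|V^l_{v_t}|)$ verification. The paper also places the $\log|V^l_{v_t}|$ factor in the connector-set induction (line~6) rather than in a heap for the greedy selection. Both decompositions land on the stated bound, so this is a bookkeeping difference rather than a gap; your attribution has the minor virtue of matching the uppercase $L$ that actually appears in the theorem statement.
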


We next introduce such an algorithm as a constructive proof.

\begin{figure}[tb!]
\vspace{-2mm}
\centering
\centerline{\includegraphics[scale=0.2]{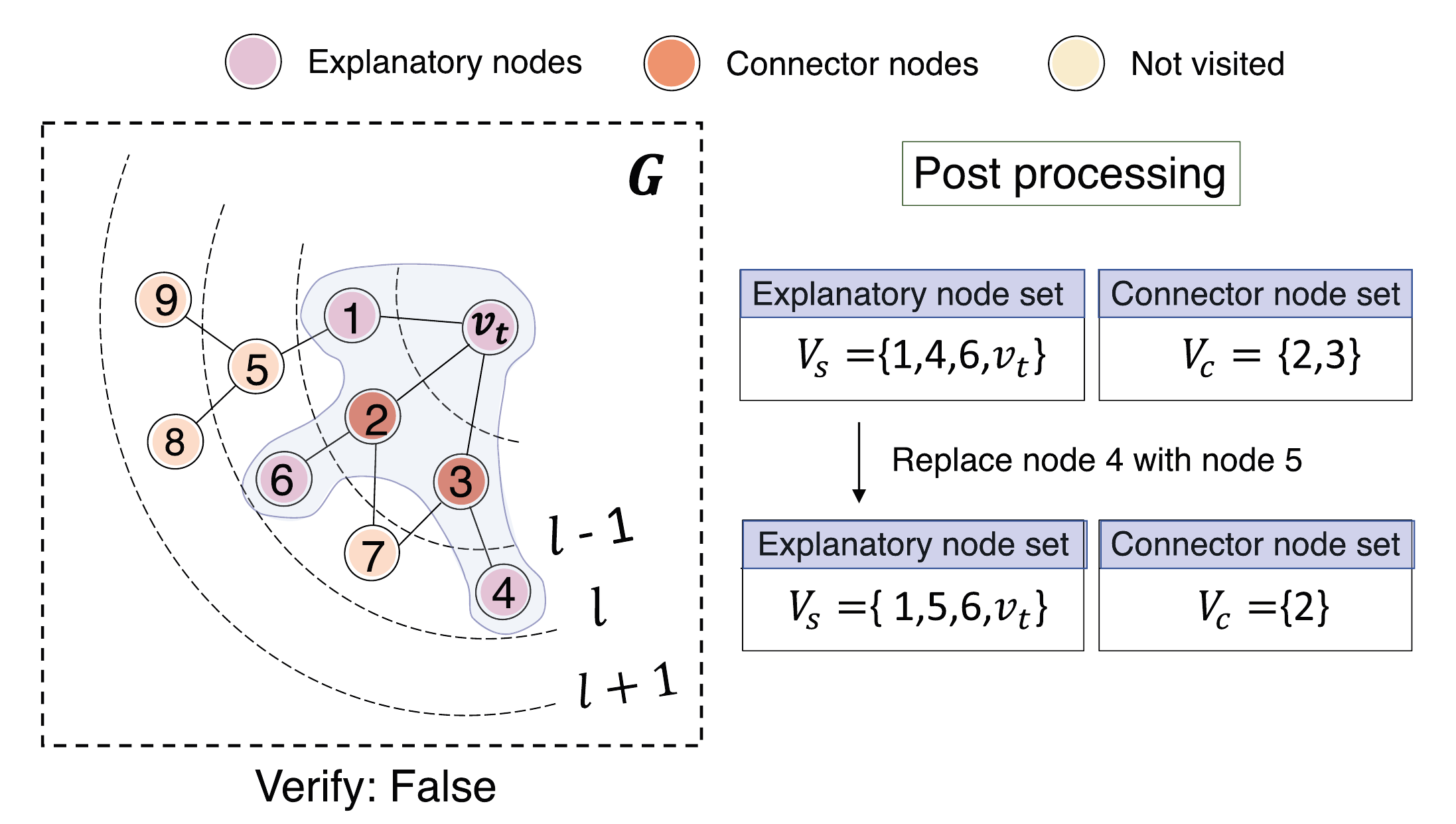}}
\vspace{-1mm}
\caption{A running example of SliceGX (k=4)}
\label{runningexample}
\vspace{-3mm}
\end{figure}

\stitle{Algorithm}. The algorithm, denoted as  
\slicess and illustrated as Algorithm \ref{alg:greedy}, takes a configuration $\C$ as input, and generates an explanation $G^l_s$ for the output $\M^{l_t}(G,v_t)$ 
of a single test node $v_t$, at a specific layer $l$. 

\eetitle{Initialization} (lines 1-2). 
Algorithm~\slicess leverages data locality in \gnn inference: For node $v_t$, an $l$-hop \gnn computes $\M(G,v_t)$ by visiting $l$-hop neighbors via message passing. Prior work treats the $l$-hop subgraph as the computational graph, where $l$ is the number of layers~\cite{luo2020parameterized}. For layer $l$ ($l\in [1,L]$), \slicess initializes $G^l_{v_t}$ as the $l$-hop subgraph centered at test node $v_t$, induced by nodes and edges within $l$-hop of $v_t$, to refine the search space.

\eetitle{Generation Phase} (lines 3-9). \slicess then applies a greedy strategy to select a set of explanatory nodes \ttrevise{$V_s$} by iteratively choosing a pair of nodes that maximize a marginal gain of $f(\cdot)$ (line 4). It then induces an explanation with the connect sets by definition (Section \ref{sec:preliminaries}). Finally it invokes a procedure \textsf{Verify} to check if \ttrevise{$G^l_s$} satisfies constraints to be an explanation and return \ttrevise{$G^l_s$} if so.

\eetitle{Post-processing} (lines 10-15). 
The generation phase does not guarantee that $G^l_s$ passes \textsf{Verify}. Thus, a lightweight post-processing phase is added. "Back up" nodes are sorted by explainability scores, replacing the lowest-scoring node in $V_s$ while preserving connectivity. The \textsf{Verify} procedure checks if the subgraph yields factual or counterfactual explanations and returns it if valid.

Finally, if no explanation is identified, \slicess returns $G^l_{v_t}$, the $l$-hop subgraph of $v_t$, a factual explanation of $\M(G, v_t)$ by default.

\stitle{Procedure \verify}. 
The procedure verifies if the subgraph $G^l_s$ induced by $V_s \cup V_c$ is a valid explanation, by determining if it is a factual or counterfactual explanation at layer $l$. Following Lemma~\ref{lm-verify}, the verification is in \PTIME. 

\begin{example}
Figure~\ref{runningexample} illustrates an explanation generated by \slicess. The explanatory node set \(V_s = \{1, 4, 6, v_t\}\) is initially chosen to maximize the explainability score with the desired size $k$. To ensure connectivity, connector nodes \(V_c = \{2, 3\}\) are added, but the subgraph fails verification. So, node 4 is replaced by node 5, updating \(V_c\) to \(\{2\}\), resulting in a valid explanation with \(V_s = \{1, 5, 6, v_t\}\) and \(V_c = \{2\}\).
\end{example}


\stitle{Correctness and Approximability}  
Algorithm~\slicess ensures correctness by selecting $V_s$ and inducing their connect set, such that $G^l_s$ is either a factual or counterfactual explanation at layer $l$. For approximability, we present the following detailed analysis:

\begin{lem}  
\label{lm-submodular}  
Given an output $ \M(G, v_t) $ and an explanation $ G^l_s $ at layer $ l $, the explainability measure $ f(G^l_s) $ is a monotone submodular function for the node set of $ G^l_s $.  
\end{lem}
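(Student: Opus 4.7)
The plan is to reduce the claim to two standard facts: (i) for any family of fixed sets $\{A_v\}_{v\in V}$, the coverage function $F(S)=\bigl|\bigcup_{v\in S} A_v\bigr|$ is monotone submodular on the ground set $V$; and (ii) any non-negative linear combination of monotone submodular functions is again monotone submodular. I would first note that $f(G^l_s)$ depends on the node set only through $V^l_s$, so it suffices to view $f$ as a set function on subsets of $V$. Then I would argue $I(\cdot)$ and $D(\cdot)$ individually, and finally combine them with the convex weights $\gamma$ and $1-\gamma$.

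First I would unfold the definitions from Section~\ref{sec-quality}. The key observation is that the relative-influence set $c(v)=\{u\mid I_2(v,u)\ge h,\ u\in N^l(v)\}$ depends only on $v$, the graph $G$, and the trained model $\M$; it is independent of which other nodes are in $V^l_s$. The same is true of $r(v)=\{u\mid d(Z^l_v,Z^l_u)\ge \theta,\ u\in N^l(v)\}$. Hence
\[
I(V^l_s)=\frac{1}{|V|}\Bigl|\bigcup_{v\in V^l_s} c(v)\Bigr|,\qquad D(V^l_s)=\frac{1}{|V|}\Bigl|\bigcup_{v\in V^l_s} r(v)\Bigr|
\]
are both coverage functions (up to the positive constant $1/|V|$). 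For monotonicity, if $S\subseteq T$ then $\bigcup_{v\in S}c(v)\subseteq\bigcup_{v\in T}c(v)$, so $I(S)\le I(T)$; the same holds for $D$. For submodularity, I would verify the diminishing-returns inequality directly: for $S\subseteq T$ and any $u\notin T$,
\[
I(S\cup\{u\})-I(S)=\tfrac{1}{|V|}\bigl|c(u)\setminus\!\!\bigcup_{v\in S}c(v)\bigr|\ \ge\ \tfrac{1}{|V|}\bigl|c(u)\setminus\!\!\bigcup_{v\in T}c(v)\bigr|=I(T\cup\{u\})-I(T),
\]
since the larger union on the right can only remove more elements of $c(u)$. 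The identical argument applies to $D$.

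Finally I would invoke closure under non-negative linear combinations: since $\gamma\in[0,1]$, both coefficients $\gamma$ and $1-\gamma$ are non-negative, and hence $f(G^l_s)=\gamma I(V^l_s)+(1-\gamma)D(V^l_s)$ inherits monotonicity and submodularity from $I$ and $D$. I do not foresee a real obstacle; the only point that deserves care is the implicit assumption that $c(v)$ and $r(v)$ are fixed at the moment the configuration $\C$ is set (in particular $Z^l_v$ is computed from the full graph $G$, not from the induced subgraph on $V^l_s$). Under this reading, which is consistent with how $I(\cdot)$ and $D(\cdot)$ are used in Algorithm~\ref{alg:greedy} for precomputed greedy selection, the result follows. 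This lemma will then serve as the workhorse for the $\tfrac{1}{2}$-approximation guarantee of \slicess, via the classical Nemhauser--Wolsey--Fisher bound for greedy maximization of a monotone submodular function under a cardinality constraint (noting that the cardinality bound $k$ applies to $V_s$).
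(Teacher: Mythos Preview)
Your proposal is correct and follows essentially the same route as the paper: both arguments recognize that $I(\cdot)$ and $D(\cdot)$ are coverage functions over fixed per-node sets $c(v)$ and $r(v)$, verify monotonicity and the diminishing-returns inequality for each, and then conclude by closure of monotone submodularity under non-negative linear combinations. Your formulation of the marginal gain as $\bigl|c(u)\setminus\bigcup_{v\in S}c(v)\bigr|$ is in fact a bit cleaner than the paper's two-case split; one small aside is that the paper does not derive the subsequent $\tfrac12$-approximation via Nemhauser--Wolsey--Fisher directly, but rather through an approximation-preserving reduction to Max-Sum Diversification, so your closing remark about how the lemma will be used diverges slightly from the paper's actual argument for Theorem~\ref{theor-approx}.
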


The bi-criteria explainability measure \( f(\cdot) \) is a submodular function maximizing relative influence \( I(V^l_s) \) and embedding diversity \( D(V^l_s) \). We then employ the approximation-preserving reduction to a Max-Sum Diversification problem~\cite{gollapudi2009axiomatic} which indicates a greedy algorithm offering a \( \frac{1}{2} \)-approximation for this goal. We present the detailed proof in the Appendix~\ref{sec-appendix}.


\stitle{Time Cost}. \slicess takes  $O(|G^l_{v_t}|)$ time to generate $l$-hop subgraph centered at $v_t$ with breadth-first traversal (line~1). It requires $\lceil \frac{k}{2}\rceil$ iterations of explanatory nodes selection, where each iteration takes $O(T|V^l_{v_t}|)$ to identify the nodes with the maximum marginal gain (based on Equation~\ref{quality}). Here, $T$ is the average time for calculating the marginal gain for each node $v\in V^l_{v_t}$. Next, it requires time cost in $O(|G^l_{v_t}|*log|V^l_{v_t}|)$ to induce connect sets and then to induce a connected subgraph to be further verified. Procedure \kw{Verify} checks whether the subgraph $G^l_s$ is an explanation in $O(l|G^l_{v_t}|)$ time. The post processing takes at most $|V^l_{v_t}|$ rounds and a verification cost in $O(l|V^l_{v_t}|)$ time. Combining these components, as $T$ and $O(l|G^l_{v_t}|)$ are relatively small compared to the dominant terms, the overall time cost is in $O(k|G^l_{v_t}|\log|V^l_{v_t}|+l|V^l_{v_t}|^2)$. Here, $l|V^l_{v_t}|^2$ comes from the post-processing stage (line 10-15, Alg 1). Empirically, such worst-case scenarios occur in only about 3\% of instances, so the actual cost is much lower in practice.

\vspace{.5ex}
With the above analysis, Theorem~\ref{theor-approx} follows. 

As \slicess method generates explanations sequentially for each  test node, it is computationally expensive, especially when dealing with multiple test nodes and layers.  To address this limitation, we introduce two optimization techniques, \slicems and \slicemm, to incrementally generate explanations. Specifically, \slicems updates auxiliary structures for all test nodes $V_T$ during the node selection phase. This optimization  reduces redundant computations and enhances scalability for large-scale scenarios. Building on \slicems, \slicemm  further extends the approach to handle multi-layer scenarios by introducing a “hop jumping” strategy that tracks  explanatory node sets across multiple layers. This enables  \slicemm to identify explanations that are both efficient and comprehensive across different layers of the network. Due to the limited  space, we provide the details in the Appendix~\ref{sec-ms}.


\begin{table*}[tb!] 
\centering
\caption{
    Fidelity Evaluation of GNN Explainers. 
    Fidelity+ (Fid+, higher is better $\uparrow$) and Fidelity- (Fid-, lower is better $\downarrow$). 
    The best and second-best performances are denoted in \textbf{bold} and \underline{underlined} fonts, respectively.
    `---' indicates that the method failed to complete within the 24-hour time limit.
}
\label{tab:fidelity_evaluation}
\vspace{-3mm}
\begin{tabularx}{\textwidth}{l *{6}{Cc}}
\toprule
& \multicolumn{2}{c}{BA-shapes} & \multicolumn{2}{c}{Tree-Cycles} & \multicolumn{2}{c}{Cora} & \multicolumn{2}{c}{Coauthor CS} & \multicolumn{2}{c}{YelpChi} & \multicolumn{2}{c}{Amazon} \\
\cmidrule(lr){2-3} \cmidrule(lr){4-5} \cmidrule(lr){6-7} \cmidrule(lr){8-9} \cmidrule(lr){10-11} \cmidrule(lr){12-13}
Explainers & Fid+ ($\uparrow$) & Fid- ($\downarrow$) & Fid+ ($\uparrow$) & Fid- ($\downarrow$) & Fid+ ($\uparrow$) & Fid- ($\downarrow$) & Fid+  ($\uparrow$) & Fid- ($\downarrow$) & Fid+ ($\uparrow$) & Fid- ($\downarrow$) & Fid+ ($\uparrow$) & Fid- ($\downarrow$) \\
\midrule
GNNExplainer & 0.2665 & 0.7264 & 0.6065 & 0.6611 & 0.1088 & 0.8173 & 0.4302 & 0.1525 & 0.2041 & 0.4238 & --- & --- \\
PGExplainer & 0.2763 & 0.6697 & 0.3689 & 0.2551 & 0.2739 & 0.2041 & 0.3217 & 0.2815 & \underline{0.3482} & 0.4255 & \underline{0.0421} & 0.8469 \\
GraphMask & 0.4302 & \underline{0.3580} & 0.1243 & \underline{0.0280} & 0.1260 & 0.7202 & 0.2341 & 0.4512 & 0.1245 & 0.7241 & 0.0012 & \underline{0.7187} \\
SubgraphX & 0.2307 & 0.4018 & \underline{0.7012} & 0.0782 & 0.4748 & \underline{0.1003} & 0.5142 & 0.1451 & --- & --- & --- & --- \\
SAME & 0.0303 & 0.6662 & 0.5124 & 0.4920 & 0.3984 & 0.2522 & 0.3516 & \underline{0.1242} & --- & --- & --- & --- \\
FlowX & \underline{0.4682} & 0.7062 & 0.5052 & 0.2583 & \underline{0.6430} & 0.7471 & \underline{0.5189} & 0.1432 & 0.3014 & \underline{0.2414} & --- & --- \\
\textbf{SliceGX} & \textbf{0.6918} & \textbf{0.0670} & \textbf{0.8047} & \textbf{0} & \textbf{0.7117} & \textbf{0.0531} & \textbf{0.7003} & \textbf{0.0341} & \textbf{0.5014} & \textbf{0.1294} & \textbf{0.1724} & \textbf{0.4521} \\
\bottomrule
\end{tabularx}
\end{table*}

\begin{figure*}[tb!]
    \vspace{-3mm}
    \centering
    \begin{minipage}[]{0.48\linewidth}
        \centering
        \includegraphics[width=\linewidth]{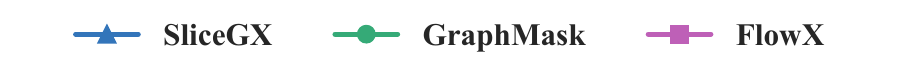}
    \end{minipage}
    \hspace{0mm}  
    \begin{minipage}[]{0.48\linewidth}
        \centering
        \includegraphics[width=\linewidth]{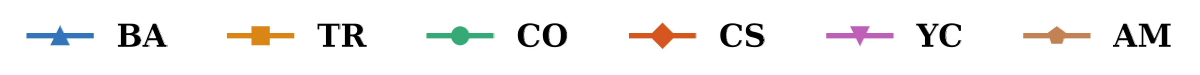}
    \end{minipage}
\end{figure*}

\begin{figure*}[tb!]
	\vspace{-7mm}
	\centering
    \subfigure[Fidelity+: Varying source layers]
	{\includegraphics[width=0.24\linewidth]{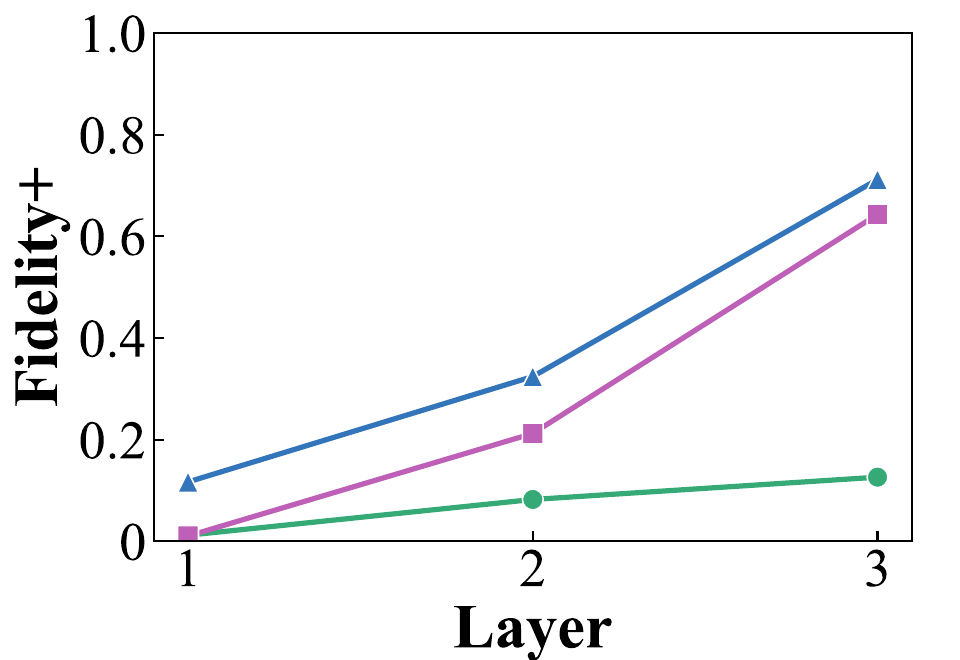}
		\label{fig:layer_wise+}}
    \subfigure[Fidelity-: Varying source layers]
	{\includegraphics[width=0.24\linewidth]{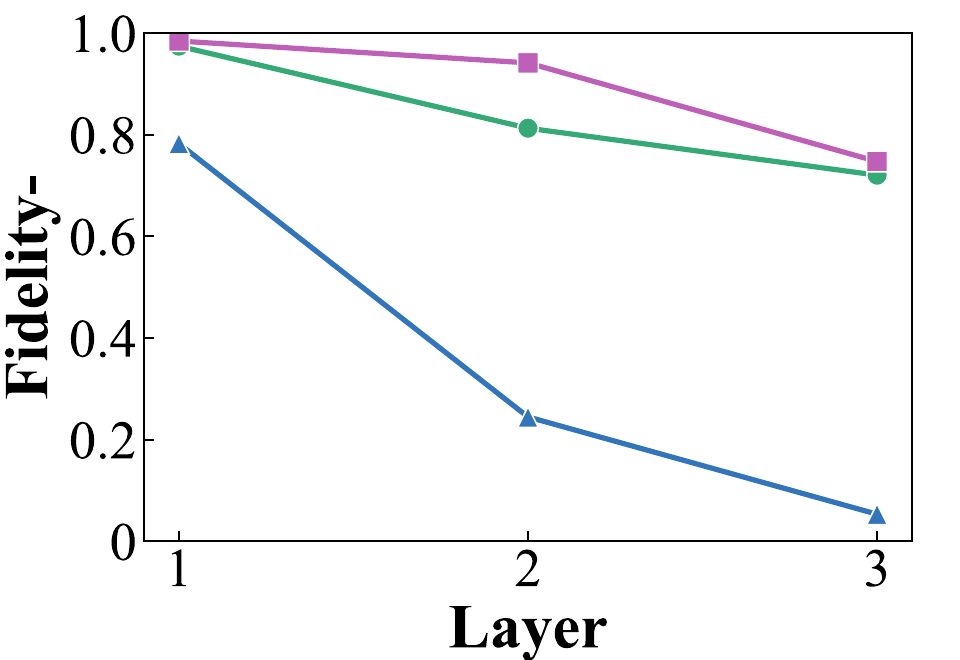}
	    \label{fig:layer_wise-}}
    \subfigure[Fidelity+: Varying $k$ (\% of Nodes)]
	{\includegraphics[width=0.24\linewidth]{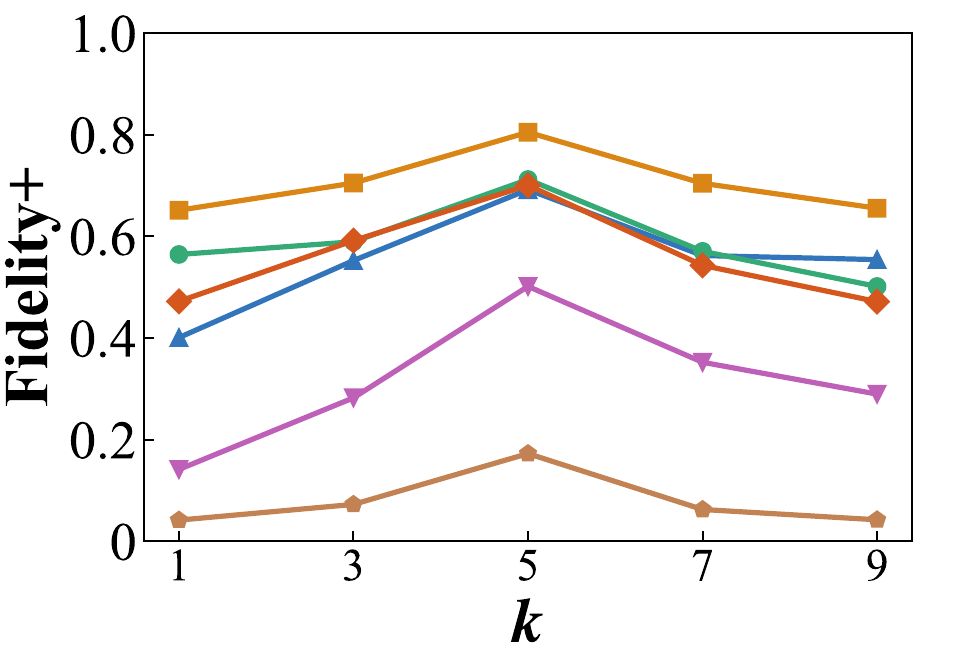}
		\label{fig:k+}}	
    \subfigure[Fidelity-: Varying $k$ (\% of Nodes)]
	{\includegraphics[width=0.24\linewidth]{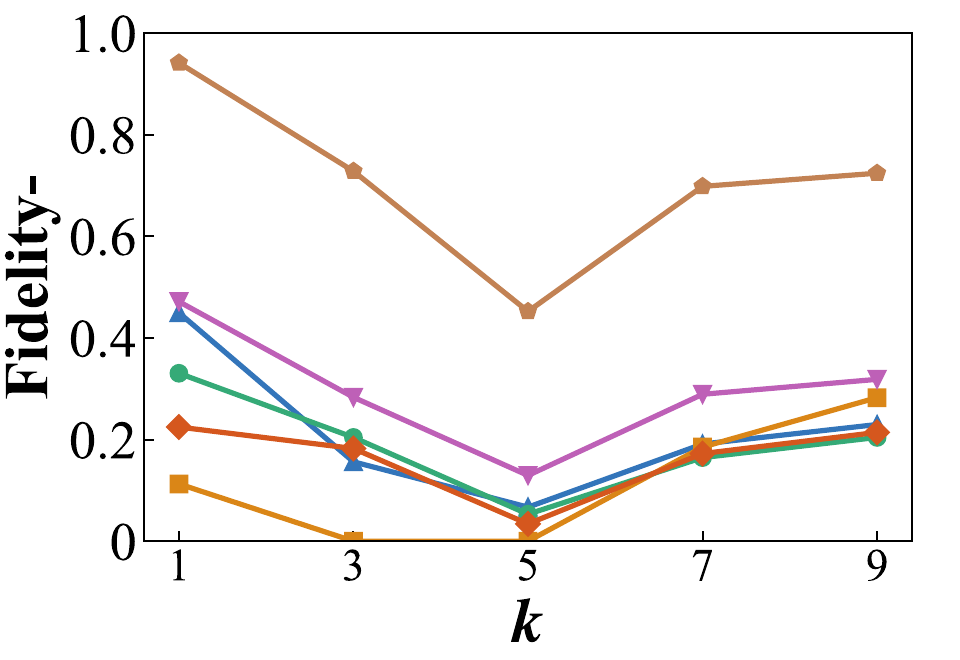}
		\label{fig:k-}}
\end{figure*}

\begin{figure*}[tb!]
    \vspace{-6mm}
    \centering
    \begin{minipage}[]{0.48\linewidth}
        \centering
        \includegraphics[width=\linewidth]{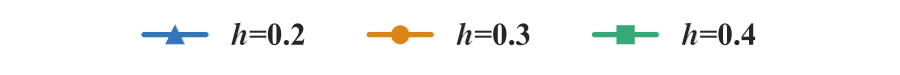}
    \end{minipage}
    \hspace{0mm}  
    \begin{minipage}[]{0.48\linewidth}
        \centering
        \includegraphics[width=\linewidth]{figures/legend1cd.pdf}
    \end{minipage}
\end{figure*}

\begin{figure*}[tb!]
    \vspace{-7mm}
	\centering
	\subfigure[Fidelity+: Varying $(h,\theta)$]
	{\includegraphics[width=0.24\linewidth]{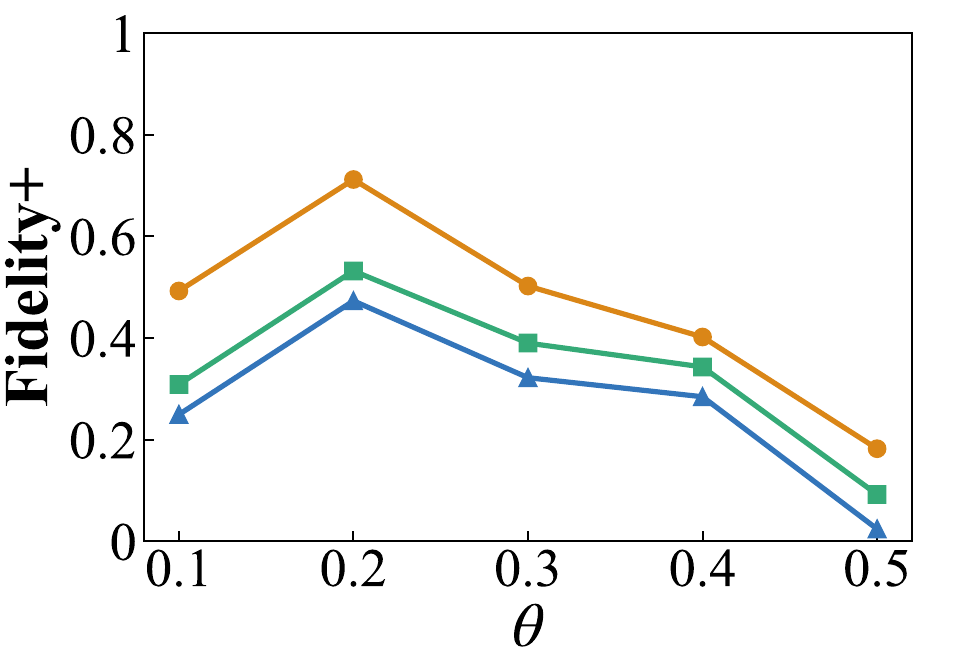}
		\label{fig:h+}}
    \subfigure[Fidelity-: Varying $(h,\theta)$]
	{\includegraphics[width=0.24\linewidth]{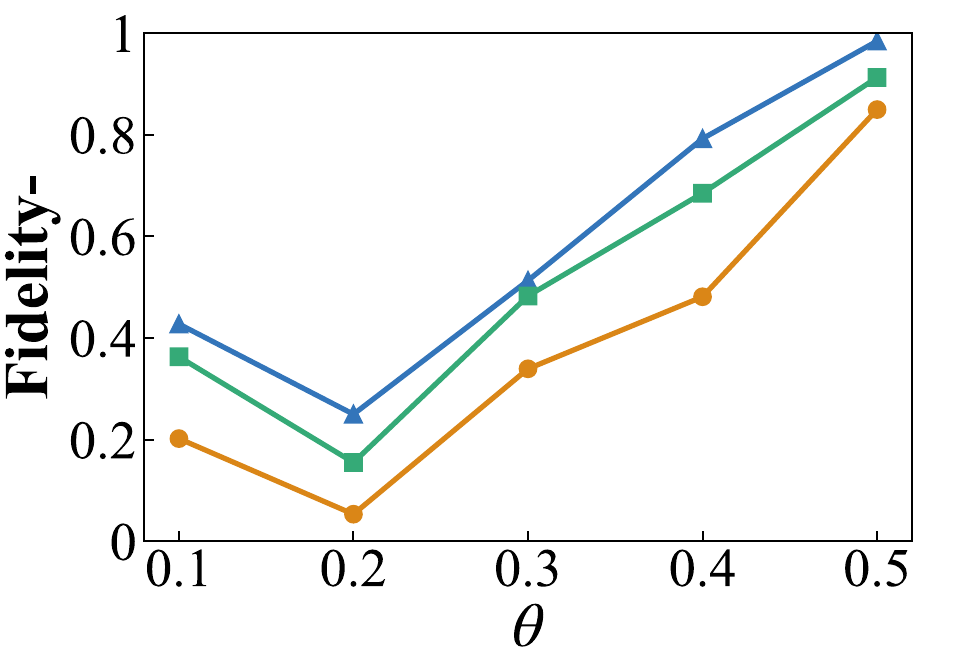}
	    \label{fig:h-}}
    \subfigure[Fidelity+: Varying $\gamma$] 
	{\includegraphics[width=0.24\linewidth]{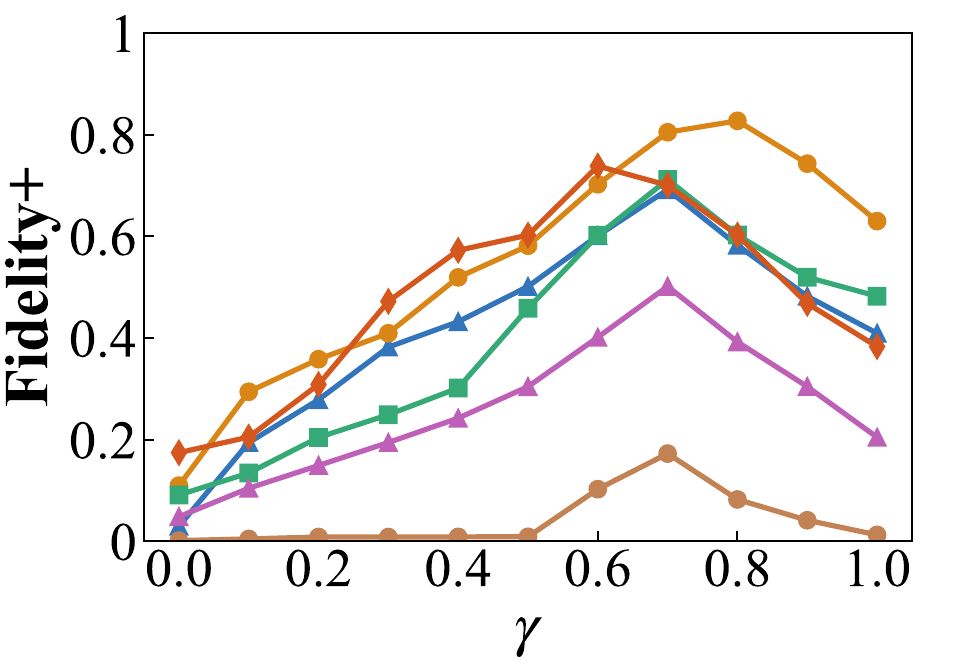}
		\label{fig:gamma+}}	
    \subfigure[Fidelity-: Varying $\gamma$] 
	{\includegraphics[width=0.24\linewidth]{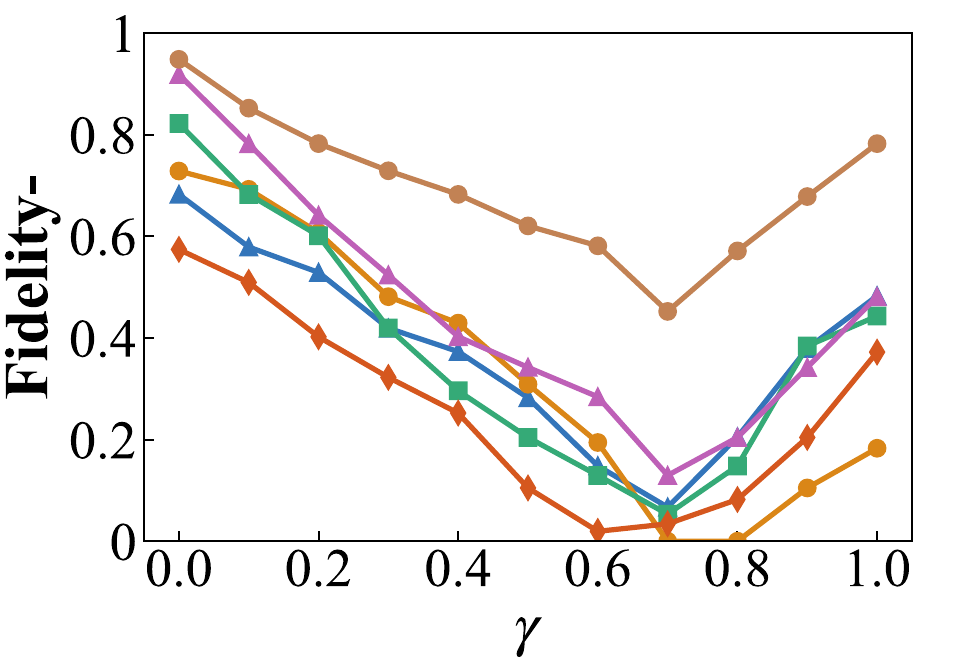}
		\label{fig:gamma-}}
    \vspace{-3mm}
	\caption{Impact of factors on quality of explanations}
    \vspace{-3mm}
	\label{fig:quality-all}
\end{figure*}



\section{Experimental Study}
\label{sec:exp}

To evaluate \slice-based explanation algorithms, We investigate the following questions. 
\textbf{RQ1}: Can \slice-based algorithms generate high-quality explanations for designated layers? 
\textbf{RQ2}: What are the impact of important factors to the quality and efficiency of explanation generation? 
\textbf{RQ3}: How fast can \slice generate explanations in a progressive manner? We also illustrate \slice-based explanation for model debugging. {\bf Our code is given at~\cite{Code}.}

\vspace{-1ex}
\subsection{Experimental Setup}
\vspace{-2ex}

\stitle{Datasets}. 
We adopt six benchmark graph datasets, as summarized in Table~\ref{tab-data}: \textbf{BA-shapes (BA)~\cite{fey2019fast}}, \textbf{Tree-Cycles (TR)~\cite{fey2019fast}}, \textbf{Cora (CO)~\cite{kipf2016semi}}, \textbf{Coauthor CS (CS)~\cite{shchur2018pitfalls}}, \textbf{YelpChi (YC)~\cite{dou2020enhancing}}, and \textbf{Amazon (AM)~\cite{zeng2019graphsaint}}. These datasets cover synthetic and real-world graphs for tasks such as node classification, spam detection, and co-authorship analysis. We provide the details in the Appendix~\ref{sec-ds}.

\stitle{GNN classifiers.} For all datasets, we have trained 3-layer Graph Convolutional Networks (\gcns)~\cite{kipf2016semi} for comparison. We used the Adam optimizer~\cite{diederik2014adam} with a learning rate of 0.001 for 2000 epochs.
We also showcase that \slice can be applied to other types of \gnns in the Appendix~\ref{sec-exp}. 

\stitle{GNN explainers.} 
We have implemented the following explainers:  \textbf{(1)} our \slice method based on  \slicess.  \textbf{(2) \gnnexp}~\cite{ying2019gnnexplainer} uses mutual information to identify key edges and node features for instance-level classification. \textbf{(3) \subx}~\cite{yuan2021explainability} applies Monte Carlo tree search to find important subgraphs via node pruning for graph classification. \textbf{(4) \pgexp}~\cite{luo2020parameterized} models edges as conditionally independent Bernoulli variables, optimized for mutual information. \textbf{(5) \gmask}~\cite{schlichtkrull2020interpreting} trains edge masks to prune irrelevant edges and generate layer-specific subgraphs. \textbf{(6) \same}~\cite{ye2023same} uses two-phase expansion MCTS to efficiently find optimal multipiece explanations via k-hop Shapley. \textbf{(7) \flowx}~\cite{gui2023flowx} uses a sampling scheme to compute Shapley values, which quantify the importance of "message flows" to the model's prediction. We also provide the results for \slicems and \slicemm in the Appendix~\ref{sec-exp}.

\stitle{Evaluation metrics.} We extend Fidelity~\cite{kakkad2023survey, prado2024survey} for layer-wise explanations. It measures faithfulness of explanations at layer $l$ with the normalized difference between the output of a \gnn over $G$ and $G \setminus G^l_s$, where $G^l_s$ is the explanation at layer $l$. 

\sstab (1) We define \textbf{Fidelity+} as:
\begin{small}
    \begin{equation}
        Fidelity+ = \frac{1}{|V_T|} \sum_{v_t \in V_T} (\phi^l(G, v_t)_c - \phi^l(G \setminus G^l_s, v_t)_c)
    \end{equation}

\end{small}
where $\phi^l(\cdot)_c$ is the predicted likelihood of label $c$ for $v_t$ from $\M^l$, and $c$ is the label assigned by $\M^l(G, v_t)$. Fidelity+ quantifies the impact of removing the explanation on label assignment, with higher values indicating better explanations. 

\sstab 
(2) Similarly, we define \textbf{Fidelity-} of $G^l_s$ as:
\begin{small}
    \begin{equation}
        Fidelity-=\frac{1}{|V_T|}\sum_{v_t\in V_T}{(\phi^l(G,v_t)_c-\phi^l(G^l_s,v_t)_c)}
    \end{equation}
\end{small}
Fidelity- at layer $l$ quantifies how ``consistent'' the result of \gnn is over $G^l_s$ compared with its result at layer $l$ over $G$. The smaller the value is, the better. 


We also report the time cost of explanation generation. For learning-based explainers, the cost includes learning overhead. 

\vspace{-0.5ex}
\subsection{Experimental Results}
\label{experiment_result}

\vspace{-1.5ex}
\stitle{Exp-1: Quality of explanations.} 
We evaluated explanation quality for target layer output, setting source layer $\L = \{3\}$, target layer $l_t = 3$. The evaluation was conducted on a test set of 100 sampled, labeled nodes ($|V_T|=100$), with the explanation size $k$ set to 5\% of the total nodes in each graph.

\eetitle{Overall Fidelity}. We conducted a quantitative fidelity evaluation, with the results summarized in Table \ref{tab:fidelity_evaluation}. Our method, \slice, consistently outperforms baseline explainers across all datasets, achieving the best performance in both Fidelity+ and Fidelity- metrics. Notably, \slice obtains a Fidelity- score of 0 on the Tree-Cycles dataset—a theoretically optimal result. This demonstrates a substantial improvement over existing state-of-the-art \gnn explainers.

\eetitle{Impact of source layers}. We varied source layers from 1 to 3 (Figs.~\ref{fig:layer_wise+}, \ref{fig:layer_wise-}) to assess robustness with a fixed target layer. Here, we selected \gmask and \flowx as the baseline explainers for comparison, as other existing explainers lack support for intermediate layer explanations. 
For both Fidelity+ and Fidelity-, \slice consistently outperforms both \gmask and \flowx across all layers. This demonstrates \slice's superior ability to generate robust and high-quality layer-wise explanations

\eetitle{Impact of $k$}. 
The variation of $k$ from 1\% to 9\% of nodes in each graph was employed in the analysis. Shown in  Figs.~\ref{fig:k+} and~\ref{fig:k-}, the best fidelity performance across all datasets was achieved at k=5\%. Thus we selected k=5\% as the optimal configuration, as it provides the strongest explanator validity while maintaining sparsity.

\eetitle{Impact of ($h$, $\theta$)}. We evaluated thresholds $h$ (influence) and $\theta$ (embedding diversity) on the Cora dataset. Figs.~\ref{fig:h+} and~\ref{fig:h-} show: (1) Proper embedding diversity ($\theta$ $\in [0.2, 0.3]$) improves explanations by mitigating bias from similar nodes, but excessive diversity introduces noise and reduces quality. (2) Higher influence ($h$ $\in [0.3, 0.4]$) improves explanations, but overly influential nodes can introduce bias if their embeddings differ significantly from target nodes.

\eetitle{Impact of $\gamma$}. 
We also evaluated the balance factor $\gamma$ between relative influence and embedding diversity in Equation~\ref{quality}  across all datasets. As shown in Figs.~\ref{fig:gamma+} and~\ref{fig:gamma-}, when $\gamma$ is too small (\ie $\gamma < 0.3$), the objective favors embedding diversity, potentially admitting noisy nodes and degrading explanation quality. Conversely, when $\gamma$ is too large (\ie $\gamma > 0.8$), The objective overemphasizes influence, biasing explanations toward influential yet redundant nodes.
\slice achieves the best performance when $\gamma$ lies in the moderate range (\ie $\gamma \in [0.6, 0.8]$), effectively balancing influence and diversity to yield high-quality explanations. These results validate our bi-criteria objective, demonstrating that \slice adapts to diverse explanation needs by tuning $\gamma$.

\begin{figure}[tb!]
    \centering
    
    \begin{minipage}[]{1\linewidth}
        \centering
        \includegraphics[width=\linewidth]{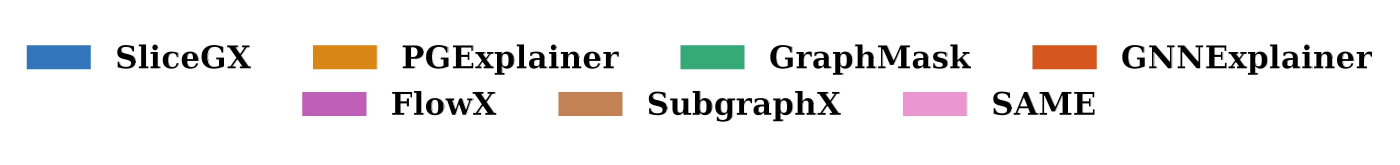}
    \end{minipage}
    
    \vspace{-3mm} 
    
    \subfigure[Efficiency: Overall]
    {\includegraphics[width=0.47\linewidth]{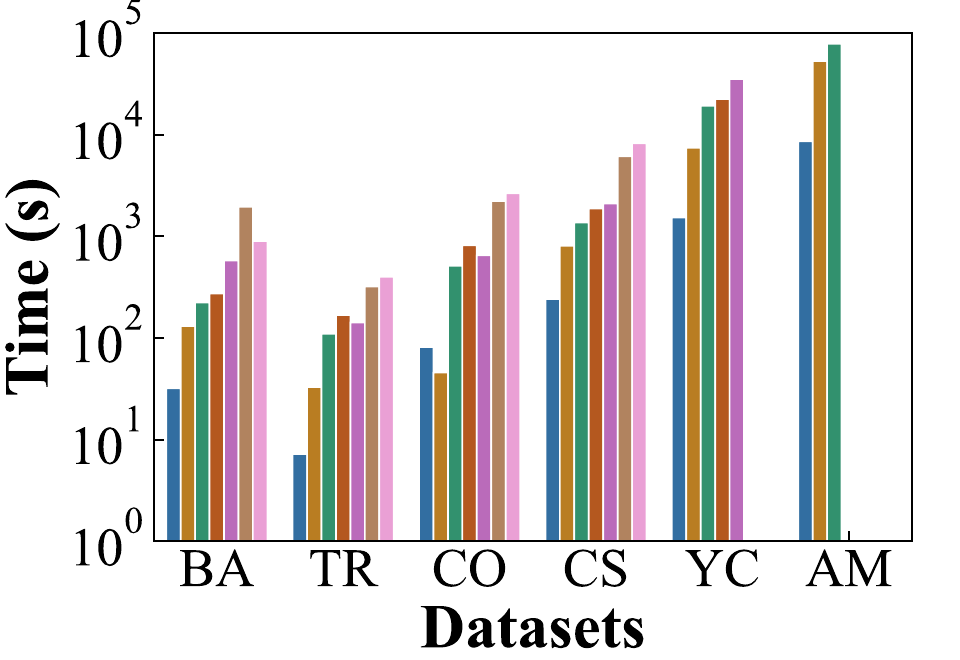}
        \label{fig:time}}
    \hspace{-1mm}
    \subfigure[Efficiency: Varying source layer]
    {\includegraphics[width=0.47\linewidth]{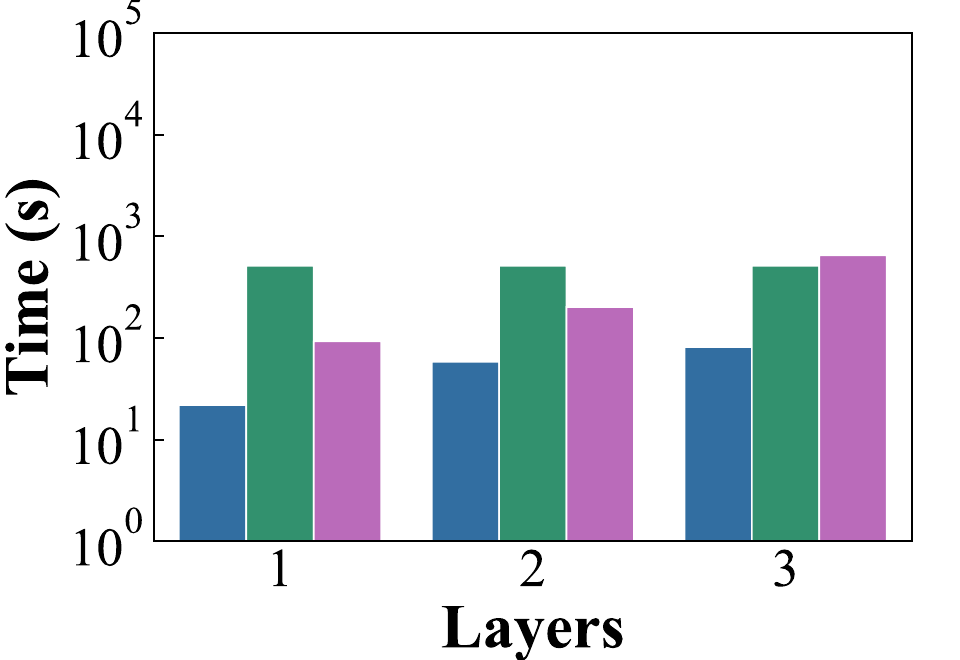}
        \label{fig:layer_time}}
    
    \vspace{-3mm}
    
    \caption{Efficiency of explanation generation}
    \label{fig:slice}
    \vspace{-3ex}
\end{figure}

\stitle{Exp-2: Efficiency of explanation generation}. We next report the efficiency of \slice, compared with other explainers.

\eetitle{Overall performance}. Figure~\ref{fig:time} details the time costs (under settings from Table~\ref{tab:fidelity_evaluation}),  \slice is by far the most efficient GNN explainer. It is  up to 6 to 55 times faster than other competitors. This efficiency is critical, as \same and \subx incur high overhead from costly Shapley value computations and tree search algorithms, respectively, and fail to complete on YC and \am (after 24 hours). Ultimately, only \slice, \pgexp, and \gmask were efficient enough to run successfully on the large AM dataset.

\eetitle{Impact of layers}. As shown in Figure~\ref{fig:layer_time}, \slice runs faster when lower source layers, confirming its reliance on data locality and $l$-hop neighbor exploration. Its progressive generation facilitates early stopping for pertinent explanations, enhancing efficiency. In contrast, \gmask remains insensitive to the choice of $l$ as it simultaneously learns masks for all layers.


\begin{figure}[tb!]
    \vspace{1ex}
    \centering
\centerline{\includegraphics[scale=0.2]{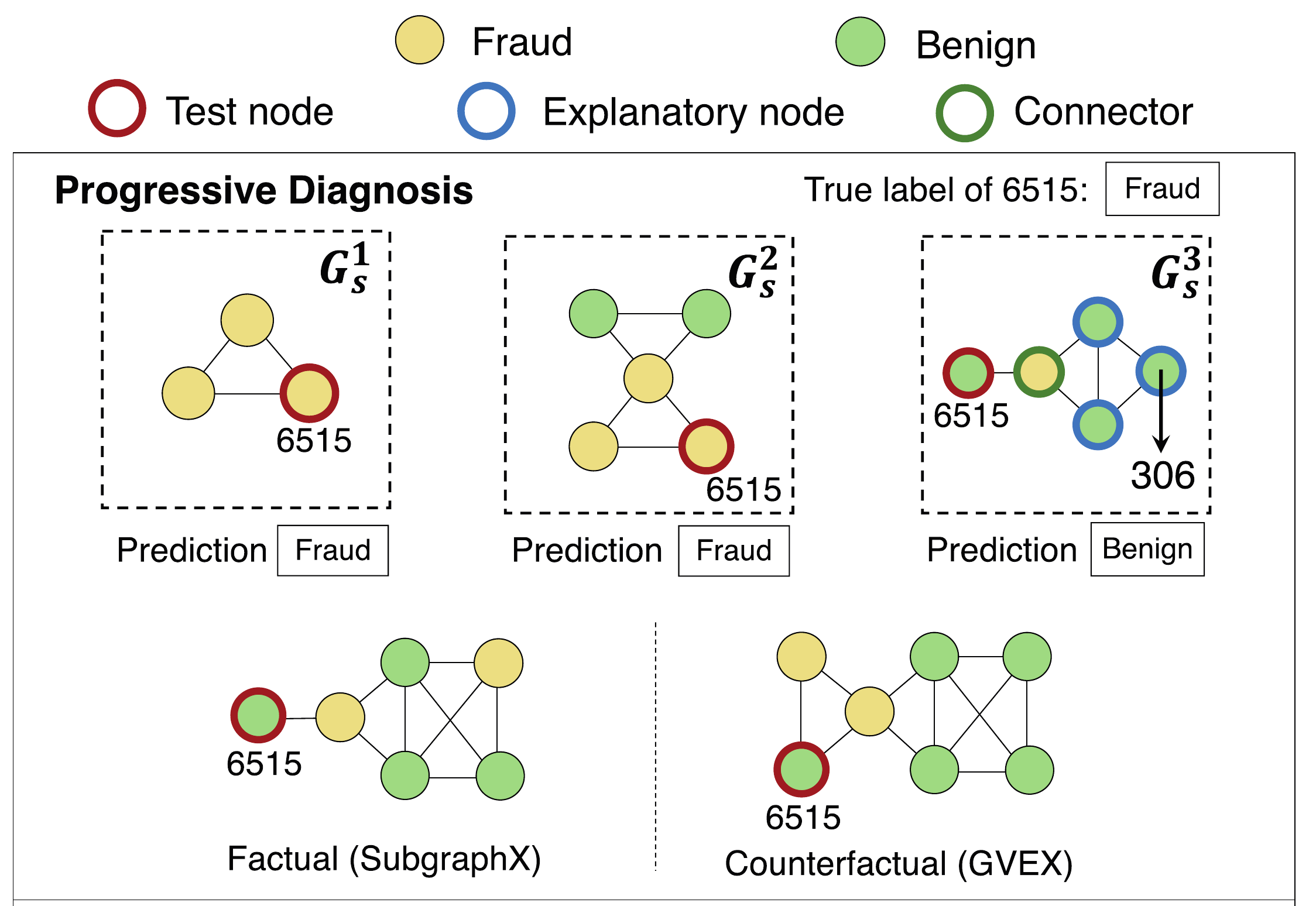}}
\vspace{-2ex}
    \caption{Case study on progressive diagnose mode}
        \vspace{-1ex}
    \label{case2}
\end{figure}

\stitle{Exp-3: Case study.} 
We next showcase the application of \slice for diagnose mode. As shown in Figure~\ref{case2}, assume a user aims to find ``why'' a \gcn $\M$ misclassified node 6515 (with true label ``Fraud'') as prediction ``Benign'' and ``when'' (at which layer) it fails (Figure~\ref{case2}). Using \slice in ``progressive diagnosis'' mode, explanations $G^1_s$, $G^2_s$, and $G^3_s$ reveal that $\M$ remains correct up to the second layer. Comparing $G^2_s$ and $G^3_s$, the error likely stems from the node 306 (``Benign'') at hop 3, leading to a failure at layer 3. For model debugging, this helps fine-tune layer 3  or switching to a 2-layer \gnn to improve accuracy. \slice can provide finer-grained, node- and layer-level explanations, whereas \subx~\cite{yuan2021explainability} and GVEX~\cite{chen2024view} -- two state-of-the-art \gnn explainability methods -- only generate a singular, final output explanation. This limitation hinders error tracing through layers, complicating misclassification diagnosis.

\section{Conclusion}
\label{sec:concl}

We introduced \slice, a layer-wise, novel \gnn explanation framework that leverages model slicing to generate explanations from selected source layers for output at target layers. We formalized explanation generation as a novel bi-criteria optimization problem, and introduced novel and efficient algorithms for both single-node, single-source and multi-node, multi-source settings. 
Our theoretical and experimental results demonstrate the effectiveness, efficiency, and scalability of \slice algorithms, as well as their practical utility for model debugging in real-world applications.

\begin{acks}
This work was supported by NSFC Grant No. 62502434 and the Ningbo Yongjiang Talent Introduction Programme (2022A-237-G). 
\end{acks}
\clearpage
\balance

\appendix
\section{Discussion}
\label{sec-discuss}

In this work, we propose a sliced model, which is derived from a pre-trained \gnn model. As described in Section~\ref{sec:extended-explanations}, an $l$-sliced model $\M^l$ consists of two primary components: the feature extractor (denoted as $f_1^l$) and the predictor (denoted as $f_2$). In practice, for an $l$-sliced model of a pre-trained \gnn $\M$ with $L$ layers, the $f_1^l$ component is constructed by taking the first $l$ pre-trained \gnn layers to capture hidden features from the graph data. Following the $f_1^l$ component, it connects the original MLP (Multi-Layer Perceptron) layers of the \gnn, constituting the $f_2$ part of our $l$-sliced model. 

There are three options to construct an $l$-sliced model:

\textbf{Option 1:} Retrain both $f_1^l$ and $f_2$;

\textbf{Option 2:} Freeze pre-trained $f_1^l$ and retrain $f_2$; and

\textbf{Option 3:} Directly use pre-trained $f^l$ and $f_2$ (adopted by \slice).

Compared to the first two options, we favor option 3 in this paper for three reasons: (a) The original model $\M$ can be treated as a black box as we focus solely on integrating layers without knowledge of internal workings of $\M$; (b) it saves time when directly utilizing the pre-trained components instead of engaging in time-consuming retraining processes; and (c) even if choosing option 1 or option 2 to retrain the model, inconsistencies in the final outputs could still exist. 

As option 3 also generates a ``new'' model $\M^l$ with potential inconsistencies, we introduce {\em guard condition} for explanation at layer $l$, including "factual" and "counterfactual," to mitigate this issue (Section~\ref{sec:extended-explanations}). For instance, by ensuring that the prediction of a test node $v_t$ using an explanation subgraph $G^l_s$ in the sliced model $\M^l$ is consistent with the prediction of $v_t$ using original graph $G$ in both $\M$ and $\M^l$, the explanation generated by option 3 remains ``faithful'' to the original model, even without any retraining. This is the rationale behind the necessity of the guard condition. However, as both Option 1 and Option 2 are more likely to violate the guard condition, it becomes difficult to generate high-quality explanations that stays faithful to the original model’s behavior. The main reasons are as follows: Although option 1 enables the sliced model $\M^l$ to fit the original data better, it poses a substantial risk of altering the original model's behavior. As our primary objective is to generate explanations based on the original \gnn model, retraining both components would create a ``new'' model that may not capture the characteristics of the original architecture and potentially change the inference process. Similarly, option 2 might make the new sliced model $\M^l$ achieve closer accuracy to the final output of the original \gnn, but it still introduces inconsistencies in model inference.

Based on derived sliced models, the design of \slice is inherently grounded in both the predictions of the original model and the sliced model in terms of guard condition, rather than treating the sliced model as an independent entity. However, for existing methods, such as \gnnexp~\cite{ying2019gnnexplainer} and \subx~\cite{yuan2021explainability}, even when applying directly to the sliced model, their explanations are not high-quality to provide truly progressive explanations that are faithful to both original model and sliced model, as they neglect the critical logical connection between the sliced model and the original model and are only accountable to the sliced model. 

\section{Proofs}
\label{sec-appendix}
\stitle{B.1 Proof of Theorem~\ref{hardness}}. 
{\em The decision problem of Explanation generation with model slicing is \NP-hard.}

\begin{proof}
We perform a reduction from the maximum coverage problem~\cite{khuller1999budgeted}. Given a collection of sets $\S$ = $S_1, \ldots S_n$, and an integer $k$, the maximum coverage problem is to find a subset $\S'\subseteq \S$ such that $|\S'|\leq k$, and the number of elements 
 $|\bigcup_{S\in\S'}S|$ is maximized (or $\geq b$ for a given threshold $b$, for its decision version).  
 
 Given $k$, $\S'$, and $\S$, 
 we construct a configuration as follows. 
 (1) We construct a graph $G$ with $V$ = $\{v_t\}\cup V_1 \cup V_2$, where 
$v_t$ is a designated target node;  for each set $S_i\in \S$, there is a node $v_{S_i}$ in $V_1$; and 
for each element $x_j\in S_i\in \S$, 
 there is a distinct node $v_{j}$ in $V_2$. 
 (2) For every element $x_{j}\in S_i$, 
 there is 
 an edge $(v_{j}, v_{S_i})$ in $E$. 
 For every node $v_{S_i}$, there is an edge between $v_{S_i}$ and $v_t$. 
 (3) Assign all the nodes in $G$ a same ground truth label. Train a $2$-layer vanilla \gnn $\M$ that assigns a correct label to $v_t$ (which is in \PTIME~\cite{chen2020scalable}). As $\M$ is a fixed model that takes as the same input $G$ as both training 
and test data, the inference ensures the invariance property of \gnns and  assigns $\M(G,v)$ with ground truth label for every 
 $v\in V$. For each 
 node $v_{S_i}$, the relative influence set $c(v_{S_i})$ is exactly the set of 
 nodes $\{v_{ij}\}$, which corresponds to the set $S_i$ and its elements. 
 Then there exists a size $k$ factual explanation 
 of $v_t$ with explainability at least $b$,  if and only if there is a solution for 
 maximum coverage with $|\bigcup_{S\in\S'}S|\geq b$. 
 As the maximum coverage problem is known to be NP-hard~\cite{khuller1999budgeted}, 
our problem is \NP-hard. 
\end{proof}

\stitle{B.2 Proof of Lemma~\ref{lm-submodular}}. 
{\em Given an output $\M(G, v)$ to be explained, 
and an explanation $G^l_s$ at layer $l$, the explainability measure $f(G^l_s)$ is 
a monotone submodular function for the node set of $G^l_s$.}

\begin{proof}
As $f(G_s^l)$ is the sum of two node set functions $I(V^l_s)$ and $D(V^l_s)$ defined in Equation~\ref{quality}, we prove that the two functions are both monotone submodular. We first analyze the property of $I(V^l_s)$. \textbf{(1)} It is clear that the relative influence set $c(v)$ for any $v\in V^l_s$ is non-negative. So $I(V^l_s)$ is non-decreasing monotone function. \textbf{(2)} Next we will show that for any set $V_{s'} \subset V_{s}$ and any node $u \notin V_{s}$, $I(\cdot)$ is submodular by verifying the following function: 
\vspace{-1mm}
\begin{equation}
    \nonumber
    I(V_{s'} \cup \{u\} ) - I(V_{s'}) \ge I(V_{s} \cup \{u\} ) - I(V_{s}) 
\end{equation}

If $c(\{u\}) \cap c(V_{s}) = \emptyset$, we have $I(V_{s} \cup \{u\} ) = I(V_{s}) + I(u)$ and $I(V_{s'} \cup \{u\} ) = I(V_{s'}) + I(u)$, which satisfies the equation. \textbf{Case (ii)}. Otherwise, we assume that $I(V_{s'} \cup \{u\} )$ = $I(V_{s'}) + \triangle  I(u|V_{s'})$ and $I(V_{s} \cup \{u\} )$ = $I(V_{s}) + \triangle I(u|V_{s})$, where $\triangle  I()$ indicates the marginal gain by adding the new element $u$. Due to the non-negative nature of $c(V_{s}\backslash V_{s'})$, we have $\lvert c(u) \cap c(V_{s}\backslash V_{s'}) \rvert \ge 0$, which indicates that $\triangle  I(u|V_{s'}) \ge \triangle  I(u|V_{s})$, thus satisfying the inequality. 

Following a similar analysis, we can show that $D(\cdot)$ is also monotone submodular. To prove that, we need to show that the function satisfies the diminishing returns property. Considering the same condition $V_{s'} \subset V_{s}$ and any node $u \notin V_{s}$, $D(V_{s'} \cup \{u\}) - D(V_{s'}) \ge D(V_{s} \cup \{u\}) - D(V_{s})$. According to union property, since $V_{s'} \subset V_{s}$, the contribution of $r(u) $ to the union $\bigcup_{v \in V_{s'} \cup \{u\}} r(v) $ will be greater than or equal to its contribution to $\bigcup_{v \in V_{s} \cup \{u\}} r(v)$ due to overlapping elements in $ r(u) $ that are already covered by $ V_{s} $. This means that the increase in $ \left|\bigcup_{v \in V_{s'} \cup \{u\}} r(v)\right| $ from adding $ u $ to $ V_{s'} $ is at least as large as the increase in $ \left|\bigcup_{v \in V_{s} \cup \{u\}} r(v)\right| $ from adding $ u $ to $ V_{s} $, satisfying the diminishing returns property required for submodularity. Putting these together, the bi-criteria explainability measure $f(G_s^l)$ is a monotone submodular function. 

\end{proof}

\stitle{B.3 Proof of Theorem~\ref{theor-approx}}.
{\em
Given a configuration $\C$, with a specific test node $v$ and a specific \gnn layer $l$, there is a $\frac{1}{2}$-approximate algorithm for generating model-slicing explanation.
}

\begin{proof}
 Given a configuration $\C$, with a specific test node $v$ and a specific \gnn layer $l$, we reformulate the problem as node selection problem, analogous to a counterpart as a Max-Sum Diversification task, with a modified facility dispersion objective $f'(V_s) = \gamma \sum_{v \in V_s^l} |c(v)| + (1 - \gamma) \sum_{\substack{u, v \in V_s^l }} \mathbf{1}[d(Z_u^l,Z_v^l) \ge \theta]$. Thus, given the approximation preserving reduction, we can map known results about Max-Sum Diversification to the diversification objective. We observe that maximizing the objective is NP-Hard, but  there are known approximation algorithms for the problem~\cite{hassin1997approximation}, which yields a \( \frac{1}{2} \)-approximation. 
\end{proof}

\vspace{-2mm}
\section{SliceMS and SliceMM}
\label{sec-ms}
\stitle{C.1 Multi-nodes, Single-Source Layers}. 
{\em The decision problem of Explanation generation with model slicing is \NP-hard even for a single source layer ($|\L|$ = $1$).}
In \slicess, the explanations are generated for each test node sequentially, which is inefficient. To address this limitation, \slicems employs an incremental approach, simultaneously selecting explanatory nodes for all test nodes $V_T$ during {\em Generation Phase} in Algorithm \ref{alg:greedy}.


\stitle{Algorithm}. The algorithm, denoted as
\slicems and illustrated as Algorithm \ref{alg:greedy2}. 

\eetitle{Initialization} (lines 1-6). Algorithm~\ref{alg:greedy2} starts with setting $G^l(V_T)$ as all potential explanatory nodes for $V_T$ because of data locality of \gnn inference. Next, it introduces entry $\V_s[i]$ to store the ``current'' explanatory node set of the $i$-th element in $V_T$, and a boolean value $v.B[i]$ to represent whether a potential explanatory node $v \in G^l(V_T)$ is likely to join $\V_s[i]$. For $i$-th element in $V_T$, denoted as $v_{t_i}$, $\V_s[i]$ is initialized to an empty set, indicating that no explanatory nodes have been selected for $v_{t_i}$, and each node $v \in G^l(v_{t_i})$ is a ``candidate'' explanatory node for $v_{t_i}$, i.e., $v.B[i]=true$. In addition, it maintains a global candidate explanatory node set $V_s'$ (line 6).

\eetitle{Incremental generation phase} (lines 7-17). ~\slicems follows a pairwise node selection based on a greedy strategy similar to \slicess. When a new node $v\in \{v_1^*,v_2^*\}$ is introduced for maximum marginal gain corresponding to at least one test node, it uses $v$ to update explanatory node set{s $\V_s[]$. Specifically, if $v.B[i]$ is true, it indicates that $v$ is a ``promising'' explanatory node for the $i$-th test node. Thus, the explanatory sets of such test nodes influenced by $v$ can be updated all at once. When an explanatory node set of a test node in $\V_s[]$ is updated, it generates a subgraph $G_s$ with connectivity. Next, it invokes Procedure \verifyms to test if $G_s$ is an explanation for multiple test nodes in $V_T$.

\stitle{Procedure} \verifyms. The procedure verifies if the subgraph $G_s$ induced by explanatory node set $V_s$ serves as an explanation for multiple test nodes in $V_T$. Specifically, for all node $v\in V_s$, if $v.B[i]$ is true, which means that $V_s$ can serve as a candidate explanation for the $i$-th test node $v_{t_i}$, then it invokes \verifyss to verify if $G_s$ is an explanation for test nodes that satisfy the verification condition in Section~\ref{sec:extended-explanations}. If so, it updates $\V_s[i]$ as the nodes in $G_s$ and reduces $V_T$ to retain only those that have no explanation to further explore explanations for them in next loop. 


\stitle{C.2 Multi-nodes, Multi-Source Layers}. 
We describe the \slicemm procedure, illustrated in Algorithm~\ref{alg:greedy3}. \slicemm processes the layers $\mathcal{L}$ sequentially, starting from the last layer and moving downward (i.e., layer 1). For each layer, it performs two main steps:  (1) It executes a ``hop jumping'' process that prunes the explanatory node set by removing nodes beyond $l$ hops when at layer $l$. This layer-by-layer updating and pruning enhance efficiency by narrowing the search space to relevant nodes, thus avoiding unnecessary computations on irrelevant nodes for the current layer's output. (2) It updates a map $\mathcal{V}_B$ using \slicems, where each entry $\mathcal{V}_B[i][j]$ contains the best explanatory node set $V^{l_j}_s$ for the output $\mathcal{M}^{l_t}(G,v_{t_i})$ concerning $v_{t_i} \in V_T$ at layer $l_j$.
\floatname{algorithm}{Algorithm}

\begin{algorithm}[tb!]
    \renewcommand{\algorithmicrequire}
    {\textbf{Input:}}
    \renewcommand{\algorithmicensure}
    {\textbf{Output:}}
    \caption{:~\slicems (multi nodes, single source layer)
    }
    \begin{algorithmic}[1]
        \REQUIRE a configuration $\C$ = $(G,\M, V_T, \{l\}, l_t, k)$;  
        \ENSURE a set of explanations $\G^l_s$ for each $v_t\in V_T$ \wrt 
        $\M^{l_t}(G,v_t)$ at \gnn layer $l$. 
        \STATE $G^l(V_T)=\bigcup_{v_t\in V_T}{G^l(v_t)}$ // union $l$-hop of $V_T$
        \STATE Set $\V_s[i]=\emptyset$ for $i\in [0,|V_T|-1]$
        \FOR{$i\in [0,|V_T|-1]$}
        \FOR{$v\in G^l(v_t)$, where $v_t$ is the $i$-th element in $V_T$,}
            \STATE set $v.B[i]=true$ // a boolean indicator
        \ENDFOR
        \ENDFOR
        \STATE set $V_s'=$ $G^l(V_T)$
        \FOR{$i\in [0,|V_T|-1]$}
        \WHILE{$|\V_s[i]| < k$ and $V_s'\neq\emptyset$}
            \STATE select $\{v^*_1$, $v^*_2\}\in V_s'$ using pair-wise node selection in \slicess(line 4) based on $i$-th element in $V_T$
            \FOR{$j \in [0,|V_T|-1]$}
                \IF{$v_1^*.B[j] = \text{true}$}
                    \STATE update $\V_s[j]=\V_s[j]\cup \{v_1^*\}$
                \ENDIF
                \IF{$v_2^*.B[j] = \text{true}$}
                    \STATE update $\V_s[j]=\V_s[j]\cup \{v_2^*\}$
                \ENDIF
            \ENDFOR
            \STATE \verifyms($\V_s[i],C$)
        \STATE $V_s' = V_s'\backslash \{v_1^*,v_2^*\}$

        \ENDWHILE
        \ENDFOR
        \RETURN $\G^l_s$. 
    \end{algorithmic}
    \label{alg:greedy2}
\end{algorithm}
\vspace{-3mm}

\begin{algorithm}[tb!]
    \renewcommand{\algorithmicrequire}
    {\textbf{Input:}}
    \renewcommand{\algorithmicensure}
    {\textbf{Output:}}
    \caption{:~\slicemm (multi nodes, multi source layers)}
    \begin{algorithmic}[1]
        \REQUIRE a configuration $\C$ = $(G,\M, V_T, \L, l_t, k)$;  
        \ENSURE a set of explanations $\G_s=\{\G^1_s,...,\G^L_s\}$ for each $v_t\in V_T$ and layer $l\in \L$ \wrt 
        $\M^{l_t}(G,v)$ at \gnn layer $l$. 
        \STATE $G^L(V_T)=\bigcup_{v_t\in V_T}{G^L(v_t)}$
        \STATE $\forall v_{t_i}\in V_T$, $\forall l_j\in \L$, set $\V_B[i][j]=\emptyset$ 
        \STATE sort $\L$ in a decreasing order
        \FOR{$l_j \in \L$}
            \STATE invoke \slicems to update $V_B[][j]$ \wrt layer $l_j$
            \STATE set $\G^{l_j}_s$ as a set of subgraphs derived from $V_B[][j]$ 
            \FOR{$v_{t_i}\in V_T$}
                \IF{$v_{t_j}$ has no explanation at layer $l_j$ \textbf{and} $l_j \neq \L$}
                    \STATE $V_B[i][j+1]=V_B[i]\backslash \{v|v\notin N^{l_{j+1}}(v_{t_i})\}$
                \ENDIF
            \ENDFOR
        \ENDFOR
        \RETURN $\G_s$. 
    \end{algorithmic}
    \label{alg:greedy3}
\end{algorithm}

 \begin{figure*}[tb!]
    \vspace{-4mm}
    \centering
    \begin{minipage}[]{0.4\linewidth}
        \centering
        \includegraphics[width=\linewidth]{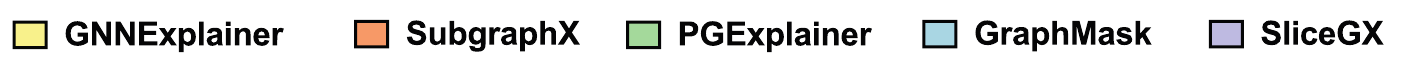}
    \end{minipage}
    \hspace{5mm}  
    \begin{minipage}[]{0.55\linewidth}
        \centering
        \includegraphics[width=\linewidth]{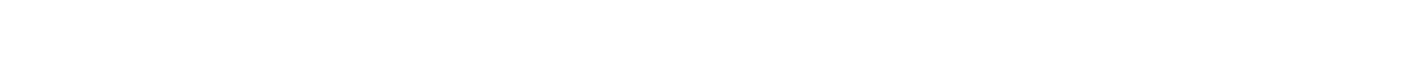}
    \end{minipage}
\end{figure*}
\begin{figure*}[tb!]
	\vspace{-8mm}
	\centering

    \subfigure[Fidelity+ over \gnns]
	{\includegraphics[width=0.2\linewidth]{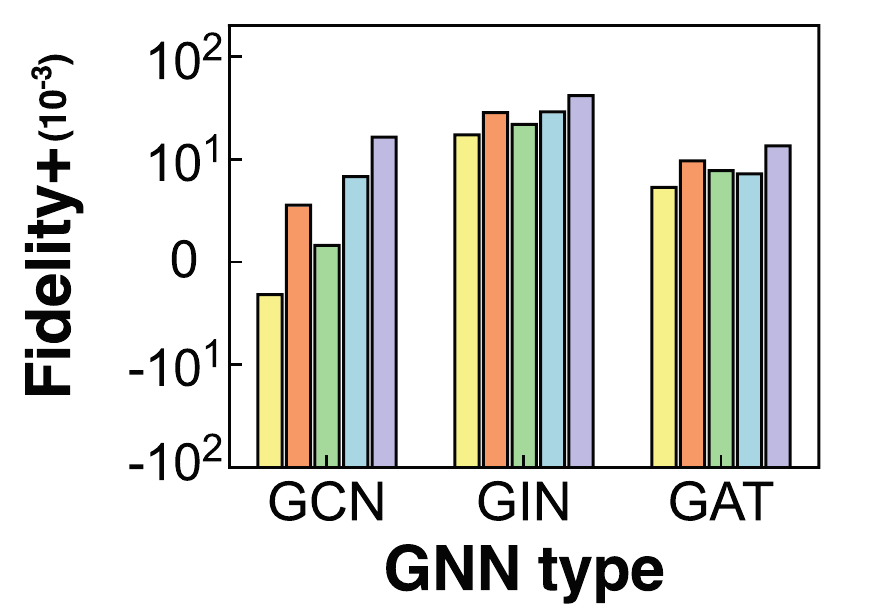}
		\label{fig:gnntype-fplus}}
    \hspace{-3.7mm}
    \subfigure[Fidelity- over \gnns]
	{\includegraphics[width=0.2\linewidth]{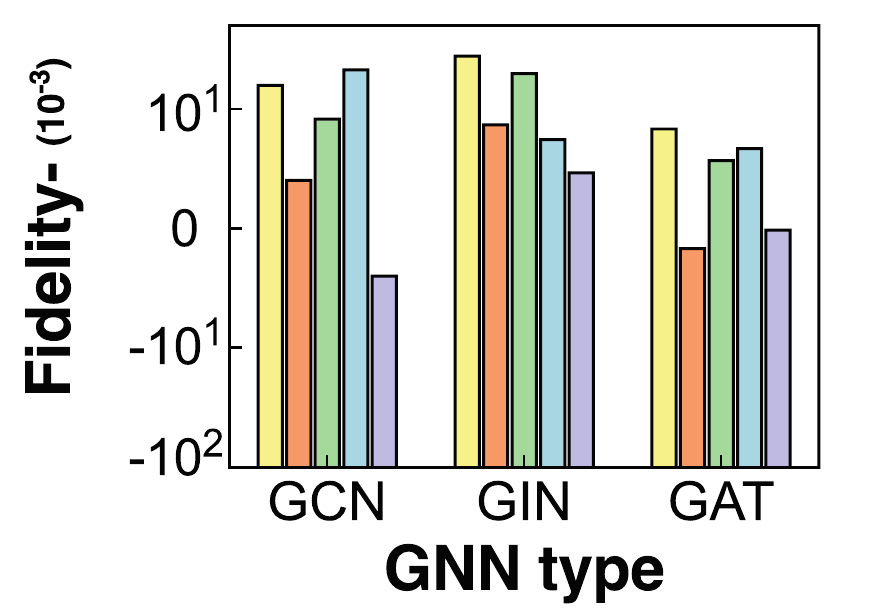}
	    \label{fig:gnntype-fminus}}
    \hspace{-3.7mm}    
    \subfigure[Fidelity+ over layer]
	{\includegraphics[width=0.2\linewidth]{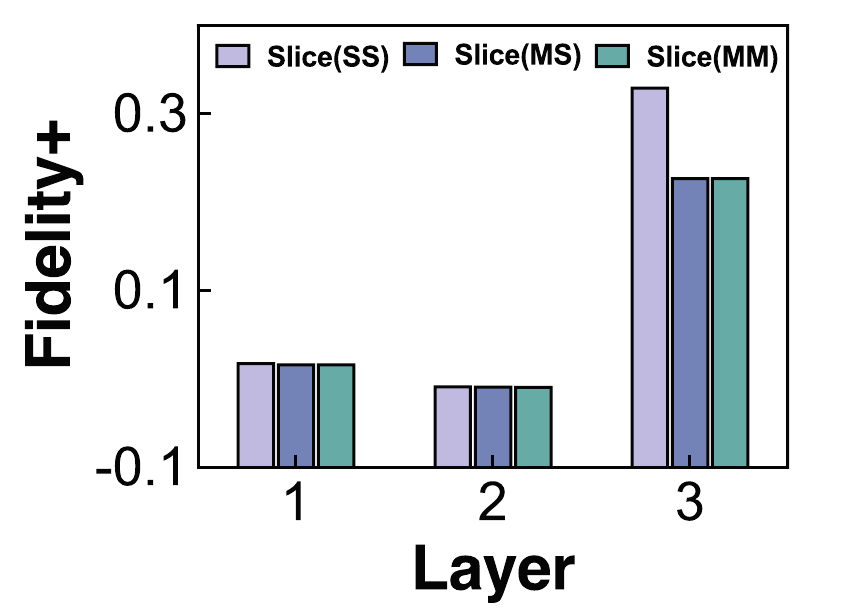}
		\label{fig:slice-fplus}}	
    \hspace{-3.7mm}
    \subfigure[Fidelity- over layer]
	{\includegraphics[width=0.2\linewidth]{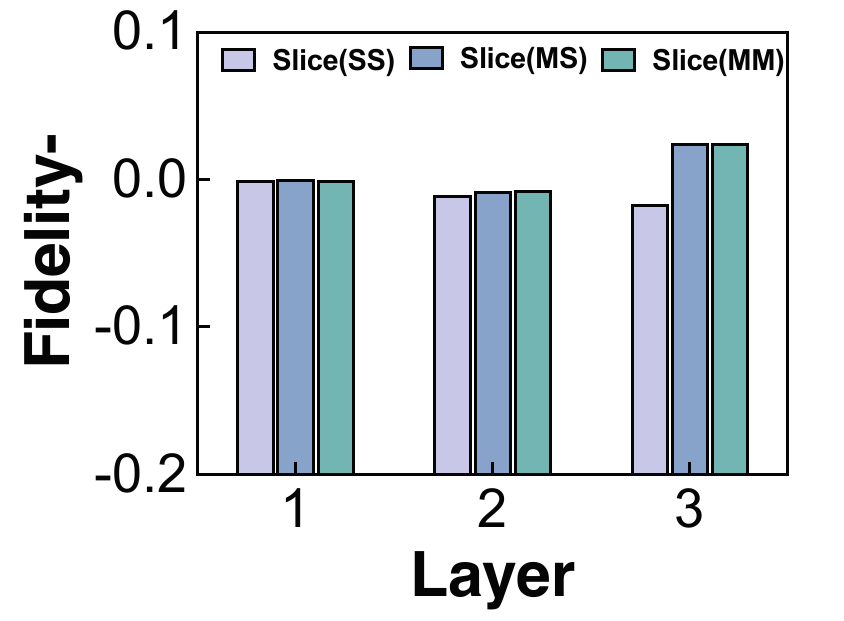}
		\label{fig:slice-fminus}}
    \hspace{-3.7mm}
    \subfigure[Runtime($|V_T|=100$)]
	{\includegraphics[width=0.2\linewidth]{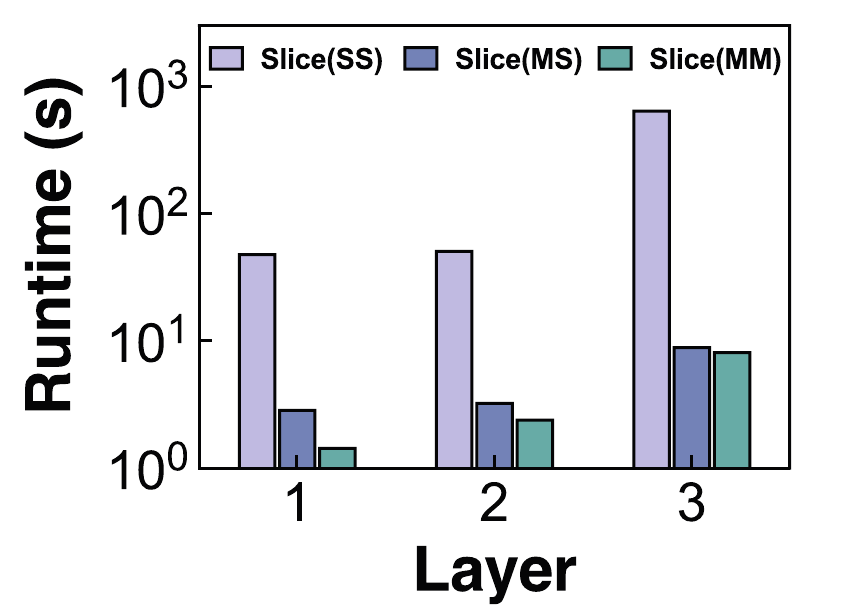}
		\label{fig:slice-time}}
	
	\par
    \hspace{-8mm}
    \textbf{\small Group1: Fidelity (different types of \gnns)}
    \hspace{25mm}
    \textbf{\small Group2: Performance of \slicems and \slicemm)}\\
    \vspace{-1mm}
    \caption{Additional experimental results}
	\label{fig:slice}
	\vspace{-2.5mm}
\end{figure*}

\section{Datasets}
\label{sec-ds}
We adopt six benchmark graph datasets~\cite{fey2019fast}, as summarized in 
Table~\ref{tab-data}. 
(1) \textbf{BA-shapes (\ba)}~\cite{fey2019fast} is a synthetic 
Barabási-Albert (BA) graphs~\cite{fey2019fast}. The graph 
is generated by augmenting a set of 
nodes with structured subgraphs (motifs). 
A benchmark task for BA is node classification, 
where the nodes are associated with 
four numeric classes. Since the dataset does not originally provide node features, we follow PGExplainer~\cite{luo2020parameterized} to process the dataset and derive the node features as 10-dimensional all-one vectors. 
(2) \textbf{Tree-Cycles (TR)~\cite{fey2019fast}} 
are synthetic graphs obtained by 
augmenting 8-level balanced binary trees with six-node cycle motifs. The dataset includes two classes: one for the nodes in the base graph and another for the nodes within the defined motifs. Similar to BA-shapes, we process the dataset to derive the node features.
(3) \textbf{Cora (CO)~\cite{kipf2016semi} } is a widely-used citation network. The nodes represent research papers, while the edges correspond to citation relationships. Each node is labeled with a topic for multi-label classification analysis.
\textbf{(4) Coauthor CS~\cite{shchur2018pitfalls} (CS)} is a co-authorship graph based on the Microsoft Academic Graph. The nodes include authors and papers, interlinked with co-authorship relations. The node features represent paper keywords, and each author is assigned a class label  indicating his or her most active field of study. 
\textbf{(5) YelpChi(YC)~\cite{dou2020enhancing} (CS)} is a binary classification dataset designed for spam reviews detection. It consists of product reviews categorized as “Benign User”
or “Fraudster”. Each node represents a review, while edges capture relationships between reviews, such as shared users, products, re-
view text, or temporal connections. 
\textbf{(6) Amazon~\cite{zeng2019graphsaint} (AM)} is a co-purchasing network, 
where each node is a product, and an edge between two products means they are co-purchased.
The text reviews of the products are processed to generate 4-gram features, which are reduced to 200-dimensional vectors using SVD, serving as the node features. The labels represent product categories, \eg books, movies.

\begin{table}[tb!]
    \centering
     \caption{Statistics and properties of datasets }
     \vspace{-1mm}
    \resizebox{\linewidth}{!}{
    \begin{tabular}{@{}c|ccccc@{}}
         \toprule
         {\textbf{Category}} & {\textbf{Dataset}} & {\textbf{\# nodes}} & {\textbf{\# edges}} & {\textbf{\# node features}} & {\textbf{\# classes}} \\
         \midrule \midrule
         \multirow{2}{*}{\textbf{Synthetic}} & {\textbf{BA-shapes}} & 700 & 4110 &10 & 4\\
         & {\textbf{Tree-Cycles}} & 871 & 1950 &10 & 2\\
         \midrule
         \multirow{4}{*}{\textbf{Real-world}} & {\textbf{Cora}} & 2708 & 10556 &1433 & 7\\
                    & {\textbf{Coauthor CS}} & 18333 & 163788 &6805 & 15\\
                    & {\textbf{YelpChi}} & 45954 & 7693958 & 32 & 2\\
                    & {\textbf{Amazon}} & 1569960 & 264339468 &200 & 107\\
         \bottomrule
    \end{tabular}
    }
    \label{tab-data}
\end{table}

\section{Additional Experimental Results}
\label{sec-exp}

\stitle{E.1 Different types of \gnns}. We showcase that \slice can be also applied to different types of \gnns, including GCN~\cite{kipf2016semi}, GIN~\cite{xu2018powerful}, and GAT~\cite{velivckovic2017graph}. In experimental setting, the hyperparameters of GIN, including the learning rate and training epochs, closely follow the setting of GCN. For GAT, we use 10 attention heads with 10 dimensions each, and thus 100 hidden dimensions. 
 
\stitle{E.2 Performance of \slicems and \slicemm}. We evaluated the overall performance and efficiency over layers by two extended algorithms of \slice (\slicess): \slicems and \slicemm. Figs.~\ref{fig:slice-fplus} and~\ref{fig:slice-fminus}
illustrate the overall performance including fidelity+ and fidelity-, and Figure~\ref{fig:slice-time} presents the time for generating explanations over layers. The results indicate that compared to \slice, \slicems achieves faster generation speed while almost maintaining original performance due to its global updating approach. Meanwhile, \slicemm utilizes a "hop jumping" process and eliminates some redundant greedy selection steps, which is the most time-efficient.

This is a desirable 
property for online explanation generation 
of \gnn applications over data streams such as traffic 
analysis, social networks, and transaction networks.

\end{document}